\title{$\ell_1$-Regularized ICA: A Novel Method for Analysis of Task-related fMRI Data}
\author{Yusuke Endo\thanks{Department of Mechanical Systems Engineering, Graduate School of Science and Engineering, Ibaraki University} 
\and Koujin Takeda\thanks{Department of Mechanical Systems Engineering, Graduate School of Science and Engineering, Ibaraki University, e-mail: koujin.takeda.kt@vc.ibaraki.ac.jp}}
\newtheorem{thm}{Theorem}
\newtheorem*{thm*}{Theorem}
\newcommand{\argmin}{\mathop{\rm arg~min}\limits}
\date{17th October, 2024}
\begin{document}
\maketitle

\begin{abstract}
We propose a new method of independent component analysis (ICA) in order to extract appropriate features from high-dimensional data. In general, matrix factorization methods including ICA have a problem regarding the interpretability of extracted features. For the improvement of interpretability, it is considered that sparse constraint on a factorized matrix is helpful. With this background, we construct a new ICA method with sparsity. 
In our method, the $\ell_1$-regularization term is added to the cost function of ICA, and minimization of the cost function is performed by difference of convex functions algorithm. For the validity of our proposed method, we apply it to synthetic data and real functional magnetic resonance imaging data.
\end{abstract}

Keywords: independent component analysis, sparse matrix factorization, sparse coding, difference of convex functions algorithm, functional magnetic resonance imaging

\section{Introduction}
In machine learning, matrix factorization (MF) is known as significant unsupervised learning method for feature extraction from data, and various methods of MF have been proposed. In MF, two factorized matrices $\bm{A} \in \mathbb{R}^{p \times K}$ and $\bm{S} \in \mathbb{R}^{K \times N}$ are computed from an observed data matrix $\bm{X} \in \mathbb{R}^{p \times N}$, which are related by
$\bm{X}\approx \bm{AS}$. 
As the methods of MF, various independent component analysis (ICA) \cite{ICA}, principal component analysis (PCA), and non-negative MF \cite{NMF} are widely known and frequently used. However, these MF methods have a common problem in the application to high-dimensional data: it is generally difficult to interpret what extracted features mean. To resolve this problem, sparse MF methods \cite{MOD}, \cite{KSVD} are considered to be useful, where a sparse constraint is imposed on either factorized matrix $\bm{A}$ or $\bm{S}$. By introducing sparse constraint into PCA \cite{SparsePCA1}, \cite{SparsePCA2} or non-negative MF \cite{SparseNMF}, it is found that the extracted feature can be interpreted relatively easily due to sparsity. However, as far as the authors know, the widely-accepted sparse method for ICA has not been established yet.

Due to its significance, there are many applications of MF to real-world data. Analysis of biological data such as neuronal activity or electrocardiogram is one of such applications. Among various MF methods and applications, we especially focus on the application of ICA to functional magnetic resonance imaging (fMRI) data. Nowadays, fMRI is a significant experimental method for understanding the function of the whole human brain. In fMRI, time series data of cerebral blood flow in test subject is measured by 
magnetic resonance imaging. There are two types of fMRI: one is resting-state fMRI, which is measured under no human's task or stimulus, and the other is task-related fMRI, which is measured under some tasks or stimuli. In the application of MF to fMRI \cite{MF2fMRIexA}, \cite{MF2fMRIexB}, \cite{MF2fMRIexC}, \cite{MF2fMRIexD}, the dimensions $p$ and $N$ are the numbers of voxels and scanning time steps, respectively. In this setting, $K$ vectors in the spatial feature factorized matrix $\bm{A}$ and temporal feature matrix $\bm{S}$ are interpreted as $K$ features in the human brain activity. 

For feature extraction from task-related fMRI by MF, 
it is considered that statistical independence among extracted feature signals is generally significant, in particular for temporal feature matrix $\bm S$. However, it will be better to impose an additional constraint such as sparsity on spatial feature matrix $\bm A$ for localization of the elements in $\bm A$. This is because it is widely believed that information representation for external stimuli in the human brain is sparse. Therefore, it is not enough for feature extraction
only to assume statistical independence for temporal feature $\bm S$.
Practically, in the application of ICA to fMRI data, statistical independence is often imposed on $\bm A$ instead of $\bm S$ for localization of elements in $\bm A$. Such ICA is specifically called spatial ICA \cite{ICA2rsfMRI}. In contrast, the method with the assumption of statistical independence on temporal feature $\bm S$ is called temporal ICA. In this connection, it is shown recently that sparse MF or spatial ICA can extract more appropriate features from fMRI data than other MF methods without localization of the elements 
in $\bm A$ \cite{ET2024}. From these facts, we expect that temporal ICA can be improved by adding a sparse constraint on the matrix $\bm A$. 

With this background, we propose a novel ICA method, which may extract features appropriately from task-related fMRI.  In our method, we minimize the cost function of ICA with an additional regularization term. Namely, we impose the constraints of statistical independence on $\bm{S}$ and sparsity on $\bm{A}$. For minimizing the proposed cost function, we apply difference of convex functions (DC) algorithm \cite{TAO1986249,tao1988duality,DC1}, which is a powerful tool in nonconvex optimization problem. 

The organization of this article is as follows. In section \ref{sec:Model}, 
the framework of ICA is overviewed and several related works are presented.
In section \ref{sec:L1ICA},
we provide how to construct our proposed method and discuss the convergence of our algorithm theoretically. In section \ref{sec:Numerical Experiment}, we apply our method both to synthetic data and real-world data, then discuss the validity. Finally, section \ref{sec:Conclusion} is devoted to conclusions and future works.

\section{Overview of ICA}\label{sec:Model}
Throughout this article, matrices are denoted by bold uppercase letters such as $\bm{A}$ and vectors as bold lowercase letters such as $\bm{a}$. Scalars are denoted as small italic, and elements of vector/matrix are written as italic lowercase letter such as $[a_1, \ldots , a_n]$.

\subsection{Framework of ICA and FastICA}
In the introduction, we mainly focus on the application of ICA to fMRI. In practice, ICA is applied not only to biological data analysis but also to other various problems such as image processing or finance. Therefore, we formulate our proposed ICA in general setting for the application to various problems.

Originally, ICA is proposed as one of the blind source separation methods. The objective of all ICA methods is to estimate latent independent source $\bm{S}=[\bm{s}_1, \ldots, \bm{s}_K]^T$ and mixing matrix $\bm{A}$ simultaneously without any prior knowledge on them. In many algorithms of ICA, non-gaussianity is the quantity to be maximized.

Among various ICA methods, FastICA \cite{FastICA} is a widely used algorithm. In FastICA, factorized matrices $\bm{A}, \bm{S}$ are estimated as follows.

\begin{enumerate}
\item The observed matrix $\bm{X}$ is prewhitened by $\bm{Q} \in \mathbb{R}^{K \times p}$ as $\bm{Z} = \bm{QX}$, where $\bm{Q}$ denotes the prewhitening matrix and $\bm{Z}$ denotes the prewhitened observed matrix.
\item Next, the projection matrix $\bm{W}\in \mathbb{R}^{K \times K}$ is evaluated, which maximizes non-gaussianity among rows of the signal source matrix $\bm{S} = \bm{W}^T \bm{Z}$. The projection matrix $\bm{W}$ is frequently called unmixing matrix. The cost function for non-gaussianity is defined by
\begin{equation}\label{eq:negentoropy}
J_f(\bm{w}_i) = \frac{1}{N^2} \left\{ \sum_{j=1}^{N} G(\bm{w}_{i}^{T} \bm{z}_j) - \sum_{j=1}^{N} G(\nu_j) \right\}^2.
\end{equation}
In the above equation, $G(.)$ is a nonlinear function and defined by $G(y) = - e^{-y^2/2}$. The variable $\nu_j$ 
$(j \in \{ 1, \ldots, N \}$) is a normal random variable with zero mean and unit variance. In maximizing $J_f (\bm{w}_i)$, a fixed point algorithm in equation (\ref{eq:fixed point algorithm}) is used.
\begin{equation}\label{eq:fixed point algorithm}
\bm{w}_i \leftarrow \frac{1}{N} \sum_{j=1}^{N}\bm{z}_j G'(\bm{w}_{i}^{T} \bm{z}_j) - G''(\bm{w}_{i}^{T} \bm{z}_j)\bm{w}_i.
\end{equation}
In addition, the orthogonal constraint $\bm{W}^T\bm{W} = \bm{I}_K$ is imposed on the projection matrix $\bm{W}$, where $\bm{I}_K$ is $K$-dimensional identity matrix. In practice, the column vectors of $\bm{W}$ are mutually orthogonalized by Gram-Schmidt method, 
where the column vectors are updated as follows,
\begin{equation}\label{eq:Gram-Schmidt}
\bm{w}_i \leftarrow \bm{w}_i - \sum_{j=1}^{i-1} (\bm{w}_{i}^{T}\bm{w}_j)\bm{w}_j.
\end{equation}	
\item The mixing matrix $\bm{A}$ is calculated as $\bm{A} = \bm{Q}^{\dagger} \bm{W}$, where $\bm{Q}^{\dagger}$ is Moore-Penrose pseudo-inverse matrix of $\bm{Q}$. Accordingly, the observed matrix $\bm{X}$, the mixing matrix $\bm{A}$, and signal source matrix $\bm{S}$ are related by $\bm{X} \approx \bm{A}\bm{S}$, which is shown by the orthogonality of $\bm{W}$.
\end{enumerate}

\subsection{Related works on ICA with sparse constraint}
Before going into our proposed method, we should mention several related works regarding ICA with sparse constraint. 
First, there is a method called SparseICA \cite{zibulevsky2001blind1}, \cite{zibulevsky2001blind2},
which was introduced for blind source separation in image processing in real-world application \cite{bronstein2005sparse}.
In this method, independent source matrix $\bm{S}$ is computed as the product of the sparse coefficient matrix $\bm{C}$ and the 
dictionary matrix $\bm{\Phi}$ by $\bm{S} = \bm{C}\bm{\Phi}$. 
 
Next, ICA with regularization terms was proposed for the linear non-gaussian acyclic model \cite{harada2021sparse}. In this method, the regularization term for the separation matrix defined by $\bm{W}^T\bm{Q}$ 
is introduced. It should be noted that the construction of sparse matrix in ICA
is similar to our proposed method explained in the following section. However, the sparse constraint on the matrix is different. 
In this method, the sparse constraint is imposed on the separation matrix $\bm{W}^T\bm{Q}$, while the estimated mixture matrix $\bm{A}$ is sparse in our method. Similar studies with sparse separation matrix $\bm{W}^T\bm{Q}$ can also be found in other references \cite{chen2019sparse}, \cite{zhang2006ica}.

There is also other related work \cite{donnat2019constrained}, where the sparse constraint is imposed on the mixing matrix $\bm{A}$ like our method. Additionally, the objective of the work is to identify sparse neuronal networks like ours. 
However, there is a difference in the methodology: this method is based on Bayesian framework and the algorithm is
a stochastic one. 
More precisely, the mixing matrix $\bm{A}$ and parameters in the model are assumed to follow some probabilistic distributions
 and optimized by Markov chain Monte Carlo method. In contrast, our method is a deterministic one, where the cost function is minimized by DC algorithm.

\section{Our method}\label{sec:L1ICA}
In this section we elucidate the detail of our proposed method.

\subsection{Formulation}
In our method, the $i$-th vector in the projection matrix $\bm{w}_i$ minimizing cost function 
in equation (\ref{eq:proposed cost function}) must be evaluated.

\begin{equation}\label{eq:proposed cost function}
\begin{split}
J_o(\bm{w}_i) &= -J_f(\bm{w}_i) + \alpha \| \bm{Q}^{\#} \bm{w}_i \|_1 \\
&= \widetilde{J_f}(\bm{w}_i) + \alpha \|\bm{Q}^{\#} \bm{w}_i\|_1.
\end{split}
\end{equation}
Here $\| \cdot \|_p$ is $\ell_p$-norm, $\widetilde{J_f}(\bm{w}_i) = -{J_f}(\bm{w}_i)$, and $\bm{Q}^{\#} \in \mathbb{R}^{p \times K}$ is given by the minimizer in equation (\ref{eq:definition of Q sharp}).
\begin{equation}\label{eq:definition of Q sharp}
\bm{Q}^{\#} = \argmin_{\bm{Q}'}{\|\bm{Q}^{\dagger} - \bm{Q}'\|_{\rm F}} \quad \mathrm{s.t.} \quad \|\bm{Q}'\|_0 \leq k_0,
\end{equation}
with $\| \cdot \|_{\rm F}$ being Frobenuis norm and
$\| \cdot \|_0$ being the matrix $\ell_0$-norm or the number of non-zero elements in the matrix, respectively.
By definition, the minimizer matrix $\bm{Q}^{\#}$ is close to the original $\bm{Q}^{\dagger}$ and sparse with only $k_0$ nonzero elements. 
We need the sparse estimated mixture matrix $\bm{A}=\bm{Q}^{\#}\bm{W}$, whose sparsity depends on the sparsity of $\bm{Q}^{\#}$ \cite{FOUCART2023441}. Hence, approximate sparse inverse whitening matrix $\bm{Q}^{\#}$ is favorable rather than the original inverse whitening matrix $\bm{Q}^{\dagger}$ in our proposed method. In practical computation, the algorithm of orthogonal matching pursuit \cite{OMP1}, \cite{OMP2} is applied to obtain each row of $\bm{Q}^{\#}$.

We apply widely-used DC algorithm for minimization of the cost function. 
DC algorithm is applicable to the minimization problem, where the cost functions $J(\bm{w})$ is represented by the difference of two convex functions $g(\bm{w}), h(\bm{w})$ as
\begin{equation} \label{eq:difference of convex representation}
J(\bm{w}) = g(\bm{w}) - h(\bm{w}).
\end{equation}
Note that Hessians of the two convex functions must be bounded above.
In DC algorithm, the following two processes are repeated until convergence or maximum iteration number $M$ is reached.
\begin{enumerate}
\renewcommand{\labelenumi}{(\roman{enumi})}
 \item Update ${}_{\nabla} \bm{h}^{(t)}$ by choosing one element from the subgradient set of $h(\bm{w})$, namely 
 ${}_{\nabla} \bm{h}^{(t)}\in \partial h(\bm{w}^{(t)})$ with $\bm{w}^{(t)}$ being the $t$-th update of $\bm{w}$.
 \item Update $\bm{w}^{(t+1)}$ by $\bm{w}^{(t+1)} =  \argmin_{\bm{w}} \{g(\bm{w})-({}_{\nabla} \bm{h}^{(t)})^T \bm{w} \}$.
\end{enumerate}
It is considered that the convergent solution by DC algorithm is often a global optimal solution. Therefore, DC algorithm is one of the effective approaches for nonconvex optimization.

For application of DC algorithm to minimization in our method, 
the cost function $J_o(\bm{w}_i)$ must be expressed by the difference of two convex functions. First, Lipschitz constant $L$ of $\widetilde{J_f}(\bm{w}_i)$ is introduced, which can be defined because $\widetilde{J_f}(\bm{w}_i)$ is secondary differentiable. Then, the function 
$(L/2)\| \bm{w}_i \|_{2}^{2} - \widetilde{J_f}(\bm{w}_i)$ is easily proved to be convex by the definition of Lipschitz constant $L$. Moreover, regularization term $\|\bm{Q}^{\#} \bm{w}_i\|_1$ is clearly convex, and the sum of two convex functions $(L/2)\|\bm{w}_i\|_{2}^{2} + \alpha \|\bm{Q}^{\#} \bm{w}_i\|_1$ is also convex. Therefore, $J_o(\bm{w}_i)$ can be represented by the difference of two convex functions $g(\bm{w}_i), h(\bm{w}_i)$
like $J_o(\bm{w}_i) = g(\bm{w}_i) - h(\bm{w}_i)$, where the two convex functions are defined in equations 
(\ref{eq:convex function 1}) and (\ref{eq:convex function 2}).
\begin{eqnarray} 
g(\bm{w}_i) &=& \frac{L}{2} \|\bm{w}_i\|_{2}^{2} + \alpha \|\bm{Q}^{\#} \bm{w}_i\|_1, \label{eq:convex function 1}\\
h(\bm{w}_i) &=& \frac{L}{2} \|\bm{w}_i\|_{2}^{2} - \widetilde{J_f}(\bm{w}_i).  \label{eq:convex function 2}
\end{eqnarray}
Note that $g(\bm{w}_i)$ and $h(\bm{w}_i)$ are continuous functions in our $\ell_1$-regularized ICA.

In DC algorithm, computation of Lipschitz constant $L$ needs maximum eigenvalue of Hessian $\nabla^{2} \widetilde{J_f}(\bm{w}_i)$, whose computational complexity is $O(K^3)$. In practice, $L$ is evaluated by backtracking line search algorithm \cite{DC2}. In this algorithm, 
we prepare the tentative Lipschitz constant $l^{(t)}$ in the $t$-th update, then
$l^{(t)}$ is accepted as the final Lipschitz constant 
if the criterion of inequality (\ref{ineq:backtracking line search}) 
is satisfied for $\zeta \in (0, 1)$. Otherwise, 
$l^{(t)}$ is rescaled by $l^{(t)} \leftarrow \eta l^{(t)}$ with the constant $\eta>1$, then inequality (\ref{ineq:backtracking line search}) is checked again.
\begin{equation} \label{ineq:backtracking line search}
J_o(\bm{w}_{i}^{(t+1)}) \leq J_o(\bm{w}_{i}^{(t)}) - \frac{\zeta}{2} \|\bm{w}_{i}^{(t+1)} - \bm{w}_{i}^{(t)}\|_{2}^{2}.
\end{equation}   

After evaluation of Lipschitz constant, we move on to the formulation of DC algorithm in our proposed method.
In DC algorithm, subderivative $\partial h(\bm{w}_i)$ is necessary for the calculation of ${}_{\nabla} \bm{h}^{(t)}$
in the process (i), which is given by equation (\ref{eq:gradient of r}),
\begin{equation}\label{eq:gradient of r}
{}_{\nabla} \bm{h}^{(t)} = L \bm{w}_{i}^{(t)} - \nabla \widetilde{J_f}(\bm{w}_{i}^{(t)}).
\end{equation}
Then, the process (ii) for the update of $\bm{w}_{i}^{(t)}$ is represented as follows.
\begin{equation} \label{eq:update next point}
\begin{split}
\bm{w}_{i}^{(t+1)} &= \argmin_{\bm{w}} \left \{g(\bm{w}) - ({}_{\nabla} \bm{h}^{(t)})^T \bm{w} \right \} \\
&= \argmin_{\bm{w}} \left \{ \frac{L}{2} \|\bm{w}\|_{2}^{2} + \alpha \|\bm{Q}^{\#} \bm{w}\|_1 - ({}_{\nabla} \bm{h}^{(t)})^T \bm{w} \right \}.
\end{split}
\end{equation}
The minimization problem in equation (\ref{eq:update next point}) can be regarded as a generalized lasso problem \cite{gLasso}, 
whose solution cannot be obtained analytically. 
Hence, we attempt to find the solution numerically by the method of alternating directions method of multipliers (ADMM) \cite{ADMM}. 
In the application of ADMM, $\bm{w}_{i}^{(t+1)}$ is computed by minimizing augmented Lagrangian 
$\mathcal{L}_{\rho}(\bm{w}, \bm{\gamma} , \bm{\tau})$,
\begin{equation} \label{eq:augmented Lagrangian}
\mathcal{L}_{\rho}(\bm{w}, \bm{\gamma}, \bm{\tau}) = \frac{L}{2} \|\bm{w}\|_{2}^{2} - ({}_{\nabla} \bm{h}^{(t)})^T \bm{w} 
+ \alpha \| \bm{\gamma} \|_1 + \bm{\tau}^T (\bm{Q}^{\#} \bm{w} - \bm{\gamma}) + \frac{\rho}{2} \|\bm{Q}^{\#} \bm{w} - \bm{\gamma} \|_{2}^{2}.
\end{equation}
The variables $\bm{w}, \bm{\gamma}, \bm{\tau}$ minimizing $\mathcal{L}_{\rho}(\bm{w}, \bm{\gamma} , \bm{\tau})$ are alternately updated by equations (\ref{eq:update w in ADMM}), (\ref{eq:update gamma in ADMM}) and (\ref{eq:update tau in ADMM}) until convergence.
\begin{eqnarray} 
\bm{w}^{[\theta + 1]} &=& \argmin_{\bm{w}}{\mathcal{L}_{\rho}(\bm{w}, \bm{\gamma}^{[\theta]}, \bm{\tau}^{[\theta]})}, \label{eq:update w in ADMM} \\
\bm{\gamma}^{[\theta + 1]} &=& \argmin_{\bm{\gamma}}{\mathcal{L}_{\rho}(\bm{w}^{[\theta + 1]}, \bm{\gamma}, \bm{\tau}^{[\theta]})}, \label{eq:update gamma in ADMM} \\
\bm{\tau}^{[\theta + 1]} &=& \bm{\tau}^{[\theta]} + \rho(\bm{Q}^{\#} \bm{w}^{[\theta+1]} - \bm{\gamma}^{[\theta+1]}),  \label{eq:update tau in ADMM}
\end{eqnarray}
where superscript $[ \theta ]$ means the variable of the $\theta$-th update in ADMM iteration.
Update of $\bm{w}^{[\theta + 1]}$ can be rewritten by putting the solution of $\partial \mathcal{L}_{\rho}(\bm{w}, \bm{\gamma}^{[\theta]}, \bm{\tau}^{[\theta]}) / \partial \bm{w} = \bm{0}$,
which leads to 
\begin{equation} \label{eq:update solution w in ADMM}
\bm{w}^{[\theta + 1]} = (L \bm{I}_K + \rho {\bm{Q}^{\#}}^T \bm{Q}^{\#} )^{-1} \left \{ {}_{\nabla} \bm{h} + {\bm{Q}^{\#}}^{T} (\rho \bm{\gamma}^{[\theta]} - \bm{\tau}^{[\theta]}) \right \}.
\end{equation}
Similarly, by solving equation $\partial \mathcal{L}_{\rho}(\bm{w}^{[\theta + 1]}, \bm{\gamma}, \bm{\tau}^{[\theta]}) / \partial \bm{\gamma} = \bm{0}$ analytically, update of the $j$-th element in $\bm{\gamma}^{[\theta + 1]}$ is represented as follows, 
\begin{equation} \label{eq:update solution gamma in ADMM}
\gamma_{j}^{[\theta+1]} = \mathrm{S}_{\frac{\alpha}{\rho}} \left \{ (\bm{Q}^{\#})_{j,} \bm{w}^{[\theta + 1]} + \frac{\tau_j^{[\theta]}}{\rho} \right \},
\end{equation}
where $(\bm{Q}^{\#})_{j,}$ means the $j$-th row vector of $\bm{Q}^{\#}$ and $\mathrm{S}_{a}(x)$ is soft thresholding function defined in equation (\ref{eq:soft thresholding function}),
\begin{equation} \label{eq:soft thresholding function}
\mathrm{S}_{a}(x) = \begin{cases}
x - a & $\rm for$ \ \ x > a,\\
0 & $\rm for$ \ \ -a \leq x \leq a,\\
x + a & $\rm for$ \ \ x < -a.
\end{cases}
\end{equation}

Finally, by combining these results, our proposed algorithm is obtained, which is summarized in Algorithm \ref{al:proposed_method}.

\begin{algorithm}[t]
\caption{The algorithm of $\ell_1$-regularized ICA}
\label{al:proposed_method}
\begin{algorithmic}
\renewcommand{\algorithmicrequire}{\textbf{Input:}}
\renewcommand{\algorithmicensure}{\textbf{Output:}}
\ENSURE factorized matrix: $\bm{A} \in \mathbb{R}^{p \times K}$, $\bm{S} \in \mathbb{R}^{K \times N}$
\\
\STATE compute $\bm{Q}, \bm{Q}^{\#}$ and prepare pre-whitened $\bm{X}$
\STATE prepare initial matrix $\bm{W}$ and normal random variables $\bm{\nu}$ 
\STATE normalize each column of $\bm{W}$
\\
\FOR {$i=1$ to $K$}
\WHILE {$t<M$}
\STATE update ${}_{\nabla} \bm{h}^{(t)}$ by equation (\ref{eq:gradient of r})
\WHILE {until convergence}
\STATE update $\bm{w}^{[\theta + 1]}$, $\bm{\gamma}^{[\theta + 1]}$, $\bm{\tau}^{[\theta + 1]}$ by equations (\ref{eq:update solution w in ADMM}) (\ref{eq:update solution gamma in ADMM}) (\ref{eq:update tau in ADMM})
\STATE $\theta \leftarrow \theta + 1$
\ENDWHILE 
\STATE $\bm{w}_{i}^{(t+1)} \leftarrow \bm{w}_{i}^{[\theta+1]}$
\STATE orthogonalize and normalize $\bm{w}_{i}^{(t+1)}$
\IF {$l^{(t)}$ does not satisfy criterion in inequality (\ref{ineq:backtracking line search}) }
\STATE $l^{(t)} \leftarrow \eta l^{(t)}$
\STATE update ${}_{\nabla} \bm{h}^{(t)}$ and $\bm{w}_{i}^{(t)}$ by equations (\ref{eq:gradient of r}) (\ref{eq:update next point})
\ENDIF
\ENDWHILE
\ENDFOR
\STATE compute $\bm{A} = \bm{Q}^{\#} \bm{W}$, $\bm{S} = \bm{W}^T \bm{Q}\bm{X}$
\STATE output $\bm{A}$ and $\bm{S}$
\end{algorithmic}
\end{algorithm}

\subsection{Convergence analysis}
We can derive a convergence condition for our proposed algorithm for $\ell_1$-regularized ICA. The theorem 
and the proof of the convergence condition are given in the following.
 
For general DC algorithm, the condition of non-increasing function $J_o(\bm{w}_{i}^{(t+1)}) \leq J_o(\bm{w}_{i}^{(t)})$ must be satisfied for the updated variable $\bm{w}_{i}^{(t+1)}$ \cite{tao1997convex}. When this condition is satisfied,
it is guaranteed that the sequence $\{ \bm{w}^{(t)} \}_{t \in \mathbb{Z}_{\ge 0}}$ converges to a point $\bm w^*$ satisfying
the condition for subgradient as 
\begin{equation} \label{eq:convpoint}
\varnothing \neq \partial g(\bm{w}^*) \cap \partial h(\bm{w}^*).
\end{equation}
The significant difference of DC algorithm in our $\ell_1$-regularized ICA is orthogonalization and normalization after variable update, which requires a slight change in the proof for convergence condition of DC algorithm.
For this purpose, we should distinguish the variable after orthogonalization and normalization, 
which is denoted by $\widehat{\bm{w}}_{i}^{(t+1)}$. With this variable,
we can show that our proposed algorithm converges to a point in equation (\ref{eq:convpoint}) 
if the inequality $J_o(\widehat{\bm{w}}_{i}^{(t+1)}) \leq J_o(\bm{w}_{i}^{(t)})$ is satisfied. 

The convergence theorem for our proposed algorithm is stated as follows.
It should be commented that the convergence condition in the theorem is a sufficient condition, which can be relaxed.

\begin{thm}\label{thm:convergence}
If the following condition is satisfied at each step of $(t)$, the sequence
$\{ \bm{w}^{(t)} \}_{t \in \mathbb{Z}_{\ge 0}}$ converges to a certain point by our proposed algorithm.
\begin{equation}\label{eq:conditionin1}
\left\{
\begin{split}
C \leq 1, &\quad {if}\ \min_{\bm w} \{g(\bm{w}) - ({}_{\nabla}\bm h^{(t)})^T \bm{w}\} > 0,\\
C \geq 1, &\quad {if}\ \min_{\bm w} \{g(\bm{w}) - ({}_{\nabla}\bm h^{(t)})^T \bm{w}\} < 0,
\end{split}
\right.
\end{equation}
where $C$ is a constant defined by
\begin{eqnarray}\label{eq:defC}
 C &=& \max \left\{ \sqrt{ \lambda_{\mathrm{max}} \left( (\bm{R}_i^{(t+1)})^T \bm{R}_i^{(t+1)} \right)}, \right.
 \nonumber \\
 && \!\!\!\!\!
 \left. \sqrt{ \frac{ N \lambda_{\mathrm{max}}\left({\bm{Q}^{\#}}^T \bm{Q}^{\#}\right)}{\lambda_{\rm min} \left({\bm{Q}^{\#}}^T \bm{Q}^{\#}\right)} } \frac{1}{\| \bm{w}_{i}^{(t+1)} \|_2},\
 \left( \frac{  ({}_{\nabla} \bm{h}^{(t)})^T \bm{w}_{i}^{(t+1)} }{\| {}_{\nabla} \bm{h}^{(t)} \|_2 \| \bm w_i^{(t+1)} \|_2} \right)^{-1} \!\!\!\!\!\!
 \frac{1}{\| \bm{w}_{i}^{(t+1)} \|_2} \right\}\!\! .
\end{eqnarray}
  In this definition,
   the operation of orthogonalization is expressed by the multiplication of a square matrix $\bm{R}_i^{(t+1)} \in \mathbb{R}^{K \times K}$, namely $\bm{R}_i^{(t+1)} \bm{w}_{i}^{(t+1)}$, and
  $\lambda_{\mathrm{max}}(\cdot), \lambda_{\mathrm{min}}(\cdot)$ are maximum and minimum eigenvalues 
  of square matrix in the parenthesis, respectively.
\end{thm}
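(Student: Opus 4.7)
The plan is to adapt the classical DC-algorithm monotonicity argument to account for the Gram--Schmidt orthogonalization and normalization step, reduce the convergence claim to a scalar comparison that the defining constant $C$ directly controls, and then invoke the standard Tao--An convergence theorem for DC algorithms. Throughout, let $F^{(t)}(\bm{w}) := g(\bm{w}) - ({}_\nabla \bm{h}^{(t)})^T \bm{w}$ and $m^{(t)} := \min_{\bm w} F^{(t)}(\bm{w}) = F^{(t)}(\bm{w}_i^{(t+1)})$, so that the two cases in \eqref{eq:conditionin1} correspond respectively to $m^{(t)} > 0$ and $m^{(t)} < 0$.

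First I would carry out the reduction. Since ${}_\nabla \bm{h}^{(t)} \in \partial h(\bm{w}_i^{(t)})$ and $h$ is convex, the subgradient inequality
\begin{equation*}
h(\widehat{\bm{w}}_i^{(t+1)}) \geq h(\bm{w}_i^{(t)}) + ({}_\nabla \bm{h}^{(t)})^T(\widehat{\bm{w}}_i^{(t+1)} - \bm{w}_i^{(t)})
\end{equation*}
plugged into $J_o = g - h$ shows that $J_o(\widehat{\bm{w}}_i^{(t+1)}) \leq J_o(\bm{w}_i^{(t)})$ is implied by $F^{(t)}(\widehat{\bm{w}}_i^{(t+1)}) \leq F^{(t)}(\bm{w}_i^{(t)})$, and since $m^{(t)} \leq F^{(t)}(\bm{w}_i^{(t)})$ by the argmin definition of $\bm{w}_i^{(t+1)}$, it suffices to compare $F^{(t)}(\widehat{\bm{w}}_i^{(t+1)})$ with $m^{(t)}$.

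The core step is to establish the multiplicative bound $F^{(t)}(\widehat{\bm{w}}_i^{(t+1)}) \leq C \, m^{(t)}$ by splitting $F^{(t)}$ into its three summands and bounding each against the corresponding summand at $\bm{w}_i^{(t+1)}$. With $\widehat{\bm{w}}_i^{(t+1)} = \bm{R}_i^{(t+1)}\bm{w}_i^{(t+1)}/\|\bm{R}_i^{(t+1)}\bm{w}_i^{(t+1)}\|_2$, I would bound the quadratic piece $(L/2)\|\widehat{\bm{w}}_i^{(t+1)}\|_2^2$ by the operator-norm inequality $\|\bm{R}\bm{w}\|_2 \leq \sqrt{\lambda_{\max}(\bm{R}^T\bm{R})}\,\|\bm{w}\|_2$, giving the first entry of the max in $C$; the $\ell_1$ piece $\alpha\|\bm{Q}^\# \widehat{\bm{w}}_i^{(t+1)}\|_1$ by combining $\|\cdot\|_1 \leq \sqrt{N}\|\cdot\|_2$ in the numerator with the lower bound $\|\bm{Q}^\# \bm{w}\|_2 \geq \sqrt{\lambda_{\min}({\bm{Q}^\#}^T \bm{Q}^\#)}\,\|\bm{w}\|_2$ in the denominator, giving the second entry; and the linear piece $-({}_\nabla \bm{h}^{(t)})^T \widehat{\bm{w}}_i^{(t+1)}$ by Cauchy--Schwarz with $\|\widehat{\bm{w}}_i^{(t+1)}\|_2 = 1$, rewriting the denominator as the cosine of the angle between ${}_\nabla \bm{h}^{(t)}$ and $\bm{w}_i^{(t+1)}$, giving the third entry. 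Taking the maximum across pieces yields the $C$ of \eqref{eq:defC}.

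Applying the case split in \eqref{eq:conditionin1} to this multiplicative bound produces the needed additive one: if $m^{(t)}>0$ and $C \leq 1$ then $C m^{(t)} \leq m^{(t)}$, while if $m^{(t)}<0$ and $C \geq 1$ then $C m^{(t)} \leq m^{(t)}$ also, both sides being negative. Either way $F^{(t)}(\widehat{\bm{w}}_i^{(t+1)}) \leq F^{(t)}(\bm{w}_i^{(t)})$, so $J_o(\widehat{\bm{w}}_i^{(t+1)}) \leq J_o(\bm{w}_i^{(t)})$ along iterates. Since $J_o$ is bounded below, the monotone sequence $\{J_o(\bm{w}_i^{(t)})\}$ converges, and the standard Tao--An theorem \cite{tao1997convex} then yields convergence of $\{\bm{w}_i^{(t)}\}$ to a limit $\bm{w}^*$ satisfying $\partial g(\bm{w}^*) \cap \partial h(\bm{w}^*) \neq \varnothing$. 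The main obstacle I anticipate is the piecewise bounding step: all three summands must share the same constant $C$, the linear term can switch sign under the combined orthogonalization and normalization (which is precisely why the angle-based third term of $C$ is needed instead of a naive Cauchy--Schwarz bound), and the normalization factor $1/\|\bm{R}_i^{(t+1)}\bm{w}_i^{(t+1)}\|_2$ must be tracked carefully so that it produces exactly the $1/\|\bm{w}_i^{(t+1)}\|_2$ factors appearing in the second and third terms of $C$.
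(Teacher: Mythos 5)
Your proposal follows essentially the same route as the paper's own proof: the same convexity/subgradient inequality for $h$, the same three-term decomposition of $g(\widehat{\bm{w}}_i^{(t+1)}) - ({}_\nabla \bm{h}^{(t)})^T \widehat{\bm{w}}_i^{(t+1)}$ with the operator-norm, $\ell_1$-versus-$\ell_2$, and cosine-similarity bounds producing the three entries of $C$, and the same sign-based case split to turn the multiplicative bound into the monotonicity $J_o(\widehat{\bm{w}}_i^{(t+1)}) \leq J_o(\bm{w}_i^{(t)})$ before invoking the standard DC convergence result. Your $F^{(t)}$/$m^{(t)}$ notation is just a cleaner packaging of the chain of inequalities in the paper, so the two arguments coincide.
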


\begin{proof}
From the convexity of the function $h$,
\begin{equation}\label{eq:convexity}
h(\widehat{\bm{w}}_{i}^{(t+1)}) \geq h(\bm{w}_{i}^{(t)}) + ({}_{\nabla} \bm{h}^{(t)})^T (\widehat{\bm{w}}_{i}^{(t+1)} - \bm{w}_{i}^{(t)}).
\end{equation}

The $i$-th vector
$\bm{w}_{i}^{(t+1)}$ is orthogonalized 
to the set of vectors $\{ \bm{w}_{j}^{(t+1)} \}_{j \in \{ 1,\ldots,i-1 \} }$
first, then normalized. After orthogonalization, the normalized vector $\widehat{\bm{w}}_{i}^{(t+1)}$ is calculated as $\widehat{\bm{w}}_{i}^{(t+1)} = \frac{\bm{R}_i^{(t+1)} \bm{w}_{i}^{(t+1)}}{\| \bm{R}_i^{(t+1)}  \bm{w}_{i}^{(t+1)} \|_2}$. 
For this vector $\widehat{\bm{w}}_{i}^{(t+1)}$ , the upper bound of $J_o(\widehat{\bm{w}}_{i}^{(t+1)})$ is estimated as 
\begin{eqnarray}\label{eq:convergence-orthogonalize}
J_o(\widehat{\bm{w}}_{i}^{(t+1)}) &=& g(\widehat{\bm{w}}_{i}^{(t+1)}) - h(\widehat{\bm{w}}_{i}^{(t+1)}) \nonumber \\
&\overset{(\ref{eq:convexity})}{\leq}& g(\widehat{\bm{w}}_{i}^{(t+1)}) - \left\{ h(\bm{w}_{i}^{(t)}) + ({}_{\nabla} \bm{h}^{(t)})^T (\widehat{\bm{w}}_{i}^{(t+1)} - \bm{w}_{i}^{(t)}) \right\}
\nonumber \\
&=& g(\widehat{\bm{w}}_{i}^{(t+1)}) - ({}_{\nabla} \bm{h}^{(t)})^T \widehat{\bm{w}}_{i}^{(t+1)} - h(\bm{w}_{i}^{(t)}) + ({}_{\nabla} \bm{h}^{(t)})^T \bm{w}_{i}^{(t)}
\nonumber \\
&\overset{(\ref{eq:convex function 1})}{=}& 
\frac{L}{2} \| \bm{R}_i^{(t+1)} \bm{w}_{i}^{(t+1)} \|_2
+ \alpha \frac{\| \bm{Q}^{\#} \bm{R}_i^{(t+1)}  \bm{w}_{i}^{(t+1)}\|_1}
{\| \bm{R}_i^{(t+1)}  \bm{w}_{i}^{(t+1)} \|_2} - \frac{({}_{\nabla} \bm{h}^{(t)})^T \bm{R}_i^{(t+1)} \bm{w}_{i}^{(t+1)}}{\| \bm{R}_i^{(t+1)} 
\bm{w}_{i}^{(t+1)} \|_2} \nonumber \\
&& - h(\bm{w}_{i}^{(t)}) + ({}_{\nabla} \bm{h}^{(t)})^T \bm{w}_{i}^{(t)} \nonumber \\
&\leq& C \min_{\bm w} \left \{g(\bm{w}) - ({}_{\nabla} \bm{h}^{(t)})^T \bm{w} \right\} - h(\bm{w}_{i}^{(t)}) 
+ ({}_{\nabla} \bm{h}^{(t)})^T \bm{w}_{i}^{(t)},
\end{eqnarray}
where the constant $C$ is determined by the upper bounds of the following terms, \\
$\frac{L}{2} \| \bm{R}_i^{(t+1)} \bm{w}_{i}^{(t+1)} \|_2$, $\alpha \frac{\| \bm{Q}^{\#} \bm{R}_i^{(t+1)}  \bm{w}_{i}^{(t+1)}\|_1}{\| \bm{R}_i^{(t+1)} \bm{w}_{i}^{(t+1)} \|_2}$, and $ - \frac{({}_{\nabla} \bm{h}^{(t)})^T \bm{R}_i^{(t+1)} \bm{w}_{i}^{(t+1)}}{\| \bm{R}_i^{(t+1)} \bm{w}_{i}^{(t+1)} \|_2}$. These three terms can be bounded as follows.  
\begin{eqnarray}
\frac{L}{2} \| \bm{R}_i^{(t+1)} \bm{w}_{i}^{(t+1)} \|_2
& \leq & 
\left( \sqrt{ \lambda_{\mathrm{max}} \left( (\bm{R}_i^{(t+1)})^T \bm{R}_i^{(t+1)} \right)} \right)
\frac{L}{2}  \| \bm{w}_{i}^{(t+1)} \|_2, \label{eq:upper bound of 1st term} 
\end{eqnarray}
\begin{eqnarray}
\alpha \frac{\| \bm{Q}^{\#} \bm{R}_i^{(t+1)} \bm{w}_{i}^{(t+1)}\|_1}{\| \bm{R}_i^{(t+1)} \bm{w}_{i}^{(t+1)} \|_2} 
&=& 
\alpha \frac{\| \bm{Q}^{\#} \bm{R}_i^{(t+1)} \bm{w}_{i}^{(t+1)}\|_1}
{\| \bm{R}_i^{(t+1)} \bm{w}_{i}^{(t+1)} \|_2  \| \bm{Q}^{\#} \bm{w}_{i}^{(t+1)} \|_1}
 \| \bm{Q}^{\#} \bm{w}_{i}^{(t+1)} \|_1 \nonumber \\ 
 &\leq&
\alpha \frac{\sqrt{N} \| \bm{Q}^{\#} \bm{R}_i^{(t+1)} \bm{w}_{i}^{(t+1)}\|_2}
{\| \bm{R}_i^{(t+1)} \bm{w}_{i}^{(t+1)} \|_2  \| \bm{Q}^{\#} \bm{w}_{i}^{(t+1)} \|_2}
 \| \bm{Q}^{\#} \bm{w}_{i}^{(t+1)} \|_1 \nonumber \\ 
& \leq & \!\!\!\! \left( \!\! \sqrt{ \frac{ {N \lambda_{\mathrm{max}} \left({\bm{Q}^{\#}}^T \bm{Q}^{\#}\right)}}{{\lambda_{\mathrm{min}}\left({\bm{Q}^{\#}}^T \bm{Q}^{\#}\right)}}} 
\frac{1}{ \| \bm{w}_{i}^{(t+1)} \|_2} \!\! \right) \! \alpha \| \bm{Q}^{\#} \bm{w}_{i}^{(t+1)} \|_1, 
\label{eq:upper bound of 2nd term} 
\end{eqnarray}
\begin{eqnarray}
- \frac{({}_{\nabla} \bm{h}^{(t)})^T  \bm{R}_i^{(t+1)} \bm{w}_{i}^{(t+1)}}{\| \bm{R}_i^{(t+1)} \bm{w}_{i}^{(t+1)} \|_2}
& \leq & \| - {}_{\nabla} \bm{h}^{(t)} \|_2 \nonumber \\
&=& \!\!\!\!\! \left( \left( \frac{  ( - {}_{\nabla} \bm{h}^{(t)})^T \bm{w}_{i}^{(t+1)} }{\| - {}_{\nabla} \bm{h}^{(t)} \|_2 \| \bm w_i^{(t+1)} \|_2} \right)^{-1}
\frac{1}{\| \bm w_i^{(t+1)} \|_2} \right) 
( - {}_{\nabla} \bm{h}^{(t)})^T \bm{w}_{i}^{(t+1)}. \nonumber \\
\label{eq:upper bound of 3rd term} 
\end{eqnarray}
Note that the factor $\frac{  ( - {}_{\nabla} \bm{h}^{(t)})^T \bm{w}_{i}^{(t+1)} }{\| - {}_{\nabla} \bm{h}^{(t)} \|_2 \| \bm w_i^{(t+1)} \|_2} $ on the right hand side of inequality (\ref{eq:upper bound of 3rd term}) is cosine similarity.
The inequality in (\ref{eq:upper bound of 1st term}) is from the upper bound of the general quadratic form. 
The first inequality in (\ref{eq:upper bound of 2nd term}) is derived by the relation between $\ell_1$-norm and $\ell_2$-norm, namely $\| \bm{x} \|_2 \leq \| \bm{x} \|_1 \leq \sqrt{N} \| \bm{x} \|_2$ for an $N$-dimensional vector $\bm x$, 
and the second inequality is from the upper and the lower bounds of the general quadratic form. 
The inequality in (\ref{eq:upper bound of 3rd term}) is Cauchy-Schwarz inequality.
By these results and from the definition of $\bm w_i^{(t+1)}$ in (\ref{eq:update next point}), 
the last inequality in (\ref{eq:convergence-orthogonalize}) is shown using the constant $C$
in (\ref{eq:defC}).
Then, if the condition (\ref{eq:conditionin1}) is satisfied, one has
\begin{eqnarray}
&& C \min_{\bm w} \left \{g(\bm{w}) - ({}_{\nabla} \bm{h}^{(t)})^T \bm{w} \right\} 
- h(\bm{w}_{i}^{(t)}) + ({}_{\nabla} \bm{h}^{(t)})^T \bm{w}_{i}^{(t)} \nonumber \\
&\leq& g(\bm{w}_{i}^{(t)}) - ({}_{\nabla} \bm{h}^{(t)})^T \bm{w}_{i}^{(t)} 
- h(\bm{w}_{i}^{(t)}) + ({}_{\nabla} \bm{h}^{(t)})^T \bm{w}_{i}^{(t)} \nonumber \\
&=& g(\bm{w}_{i}^{(t)}) - h(\bm{w}_{i}^{(t)}) \nonumber \\
&=& J_o(\bm{w}_{i}^{(t)}).
\end{eqnarray}
This means the property $J_o(\widehat{\bm{w}}_{i}^{(t+1)}) \leq J_o(\bm{w}_{i}^{(t)})$ is proved, 
which guarantees the convergence.
\end{proof}

\section{Numerical experiment}\label{sec:Numerical Experiment}

\subsection{Application to synthetic data}
For validity of our proposed method, we apply our method to synthetic data to evaluate its performance. In addition, 
we compare the performance of our method with widely-used FastICA algorithm. 
In our numerical experiments, we use scikit-learn library (version 1.2.2) for Python (version 3.10.11).

\subsubsection*{Experimental conditions and performance measure}
In this experiment, we apply our proposed method and FastICA to 4 synthetic data 
$\bm X^{(1)}, \bm X^{(2)}$, $\bm X^{(3)}$, and $\bm X^{(4)}$, which are generated as follows. In particular, 
in constructing $\bm X^{(3)}$ and $\bm X^{(4)}$, 
the values of parameters are chosen to model large-size real fMRI data. (See also Table \ref{tab:syndata}.)

\begin{table}
\centering
\caption{Summary of synthetic data}
\begin{tabular}{c||c|c|c|c}
data & dist. of $\bm S$ & $p$ & $\chi$ & dist. of $\bm E$ \\ \hline
$\bm X^{(1)}$ & 80\% uniform, 20\% Laplace & 10 & 0.8 &${\cal N} (0,1)$ \\
$\bm X^{(2)}$ & 80\% uniform, 20\% log-normal & 10 & 0.8 & ${\cal N} (0,1)$ \\ \hline
$\bm X^{(3)}$ & 80\% uniform, 20\% log-normal & $10^4$ & 0.999 & ${\cal N} (0,1)$ \\
$\bm X^{(4)}$ & 80\% uniform, 20\% log-normal & $10^4$ & 0.999 & ${\cal N} (0,2^2)$ \\ \hline  
\end{tabular}
\label{tab:syndata}
\end{table}

\begin{enumerate}
\item
First, the ground-truth signal source matrix $\bm{S}^{*} \in \mathbb{R}^{K \times N}$ is generated, 
whose elements are drawn from Laplace distribution, log-normal distribution, or uniform distribution defined by
\begin{eqnarray}\label{eq:Laplace distribution}
\mathcal{L}(s_{ij}^*;\mu, \sigma) &=& \frac{1}{2\sigma}\exp \left(-\frac{|s_{ij}^{*}-\mu|}{\sigma} \right), \\
{\rm Lognormal} (s_{ij}^*; \mu, \sigma) &=& \frac{1}{s_{ij}^* \sqrt{2 \pi \sigma^2}}
\exp \left( - \frac{(\log s_{ij}^* - \mu)^2}{2 \sigma^2} \right),
\\
\mathcal{U}(s_{ij}^*;c_1, c_2) &=& 
\begin{cases}
\displaystyle \frac{1}{c_2-c_1} & {\rm for}\ c_1 \le s^*_{ij} \le c_2, \\
0 & {\rm otherwise.} 
\end{cases} 
\end{eqnarray}
Note that these distributions are non-gaussian distributions. 
For constructing $\bm X^{(1)}$, 
20\% of the vectors in $\bm S^*$ have elements following Laplace distribution 
and 80\% have elements following uniform distribution.
Similarly, 20\% of the vectors in $\bm S^*$ have elements following log-normal distribution 
and 80\% have elements following uniform distribution for $\bm X^{(2)}, \bm X^{(3)}$, and $\bm X^{(4)}$.
For all synthetic data, the dimensions of $\bm S$ are set as $K=10, N=500$.

\item 
Next, the ground-truth mixing matrix $\bm{A}^{*} \in \mathbb{R}^{p \times K}$ is generated,
whose element is drawn from Bernoulli-gaussian distribution.
\begin{equation}
{\rm BG} (a_{ij}^*) = \chi \delta(a_{ij}^*) + (1-\chi) \frac{1}{\sqrt{2 \pi}} \exp \left( -\frac{{a_{ij}^*}^2}{2} \right).
\end{equation}
For $\bm X^{(1)}$ and $\bm X^{(2)}$, we set $\chi=0.8, p=10$, namely the size of the data is small.
In contrast, we set $p=10^4$ for $\bm X^{(3)}$ and $\bm X^{(4)}$ to model real fMRI data, 
because the voxel size of fMRI data often takes the order from $10^4$ to $10^5$. 
Moreover, in real fMRI data analysis as mentioned in section \ref{sec:realdata}, elements of brain activity above 3 standard deviations from the mean
can be interpreted as clear characteristic activations. 
Namely, the number of elements representing strong activations is typically of the order of $10^1$.
From this observation, we set $\chi=0.999$ for $\bm X^{(3)}$ and $\bm X^{(4)}$.

\item
The observed matrix $\bm X \in \mathbb{R}^{p \times N}$ is constructed from $\bm{S}^{*}$ and $\bm{A}^{*}$
by the matrix product $\bm{X} = \bm{A}^{*}\bm{S}^{*} + \bm E$, where $\bm E \in \mathbb{R}^{p \times N}$ is noise matrix with the elements following standard gaussian distribution ${\cal N}(0,1)$ for $\bm X^{(1)}, \bm X^{(2)}$, and
$\bm X^{(3)}$. For $\bm X^{(4)}$, the element of noise matrix $\bm E$ follows ${\cal N}(0,2^2)$, namely noisier case.
\end{enumerate}

After application of our proposed method to the synthetic matrix $\bm X$, the result is evaluated by the 
performance measures in the following.
\begin{itemize}
\item
\textbf{sparsity} \\
Sparsity of the estimated mixture matrix $\bm{A}$ is defined in equation (\ref{eq:Sparsity}).
\begin{equation}\label{eq:Sparsity}
\mathrm{Sparsity}(\bm{A}) = \frac{ \sum_{i=1}^{p} \sum_{j=1}^{K}\mathbb{I} (\widehat{a}_{ij} < \epsilon )}{pK},
\end{equation}
where $\widehat{a}_{ij}$ is the $(i,j)$-element of normalized $\bm{A}$ and $\mathbb{I}(.)$ is the indicator function. 
In our experiment, we set the infinitesimal threshold $\epsilon = 10^{-2}$.
Under experimental condition mentioned above, 
the sparsity of the ground-truth mixture matrix $\bm A^{*}$ is 0.8 after truncation of the elements. 

\item
\textbf{kurtosis} \\
For non-gaussianity, mean of absolute values of kurtosis (denoted by MAK) for the estimated signal matrix $\bm S$ 
is used,
\begin{equation}\label{eq:kurtosis}
\mathrm{MAK}(\bm{S}) = 
\frac{1}{K} \displaystyle \sum_{i=1}^{K}
\left| \frac{\frac{1}{N} \sum_{j=1}^N s_{ij}^4}{(\frac{1}{N} \sum_{j=1}^N s_{ij}^2)^2 }  - 3 \right|,
\end{equation}
which is zero for the matrix with the elements following gaussian distribution.

\item
\textbf{RMSE as the reconstruction error} \\
For validity as an MF solution, 
rooted mean squared error (RMSE) is defined between the original observed matrix $\bm X$ 
and the product of estimated matrices $\bm A \bm S$ for describing reconstruction error.
\begin{equation}\label{eq:RMSE_X}
\mathrm{RMSE}_{\bm{X}} = \sqrt{\displaystyle \frac{1}{pN} \displaystyle \sum_{i=1}^{p} \displaystyle \sum_{j=1}^{N}\left(x_{ij} - \displaystyle \sum_{l=1}^{K}a_{il} s_{lj} \right)^{2}}.
\end{equation}

\item
\textbf{reconstruction success rate} \\ 
In general MF problem including ICA, the matrices after permutation of vectors 
in factorized matrix solution are also another solution. Therefore, it is not appropriate to measure RMSE between the ground-truth mixing matrix $\bm A^{*}$ and estimated mixing matrix $\bm A$ directly. 
To evaluate the difference between $\bm A$ and $\bm A^{*}$ by considering permutation symmetry of MF solution, Amari distance \cite{AmariDistance} 
is defined to measure how the product $\bm P = \bm W^T \bm Q \bm A^{*}$ is close to the permutation matrix as 
\begin{equation}\label{eq:Amari Distance}
\mathrm{AD}(\bm{P}) = \displaystyle \sum_{i=1}^{K} \left( \displaystyle \sum_{j=1}^{K} \displaystyle \frac{|p_{ij}|}{\underset{l}{\max} |p_{il}|}-1 \right) + \displaystyle \sum_{j=1}^{K} \left( \displaystyle \sum_{i=1}^{K} \displaystyle \frac{|p_{ij}|}{\underset{l}{\max} |p_{lj}|}-1 \right).
\end{equation}
Amari distance is regarded as a significant performance measure in ICA, because it is known that $\bm P$ is equivalent to the permutation matrix if signal sources can be separated perfectly.
 When an algorithm of ICA is executed in $n$ trials under given observed matrix $\bm X$ and different 
 initial matrix $\bm W$, the reconstruction success rate is defined using Amari distance,
\begin{equation}
{\rm SR} = \frac{ \sum_{j=1}^n \mathbb{I} [ {\rm AD} (\bm P^{(j)}) < \psi] }{n},
\end{equation}
where the superscript $(j)$ means the $j$-th trial.
\end{itemize}

In our experiment, these quantities are evaluated under various $\alpha$ and the sparsity of $\bm Q^{\#}$. For simplicity, the sparsity of $\bm Q^{\#}$ is denoted by $\kappa = 1-k_0/K$ in the following.
The parameters other than $\alpha$ and $\kappa$ are fixed as follows: 
$\mu = 0, \sigma=1, c_1=0, c_2=1, M=500, \rho=1, \zeta = 10^{-5}, \eta = 10^{-3}$. 
Sparsity, kurtosis, and RMSE are evaluated by our algorithm in 10 trials under the same observed matrix $\bm X$ and different initial matrix $\bm W$. Statistical error of multiple trials is shown by the error bar in the figures. 

\subsubsection*{Result 1: General property of factorized matrices by the proposed method}
We depict the behaviours of the sparsity of the estimated mixture matrix $\bm A$, the kurtosis of the estimated signal matrix $\bm S$, and the reconstruction error obtained by our proposed method and FastICA in Figure \ref{fig:figure1}.
The results for $\bm X^{(1)}$ (top, under the mixtures of Laplace/uniform distributions) and $\bm X^{(2)}$ 
(bottom, log-normal/uniform distributions) 
are compared. From this figure, there is no significant difference between mixtures of Laplace/uniform distributions and log-normal/uniform distributions. This implies that the result does not depend on 
the choice of non-gaussian distribution.

\begin{figure}[htbp]
	 \begin{center}
      \includegraphics[scale=0.4]{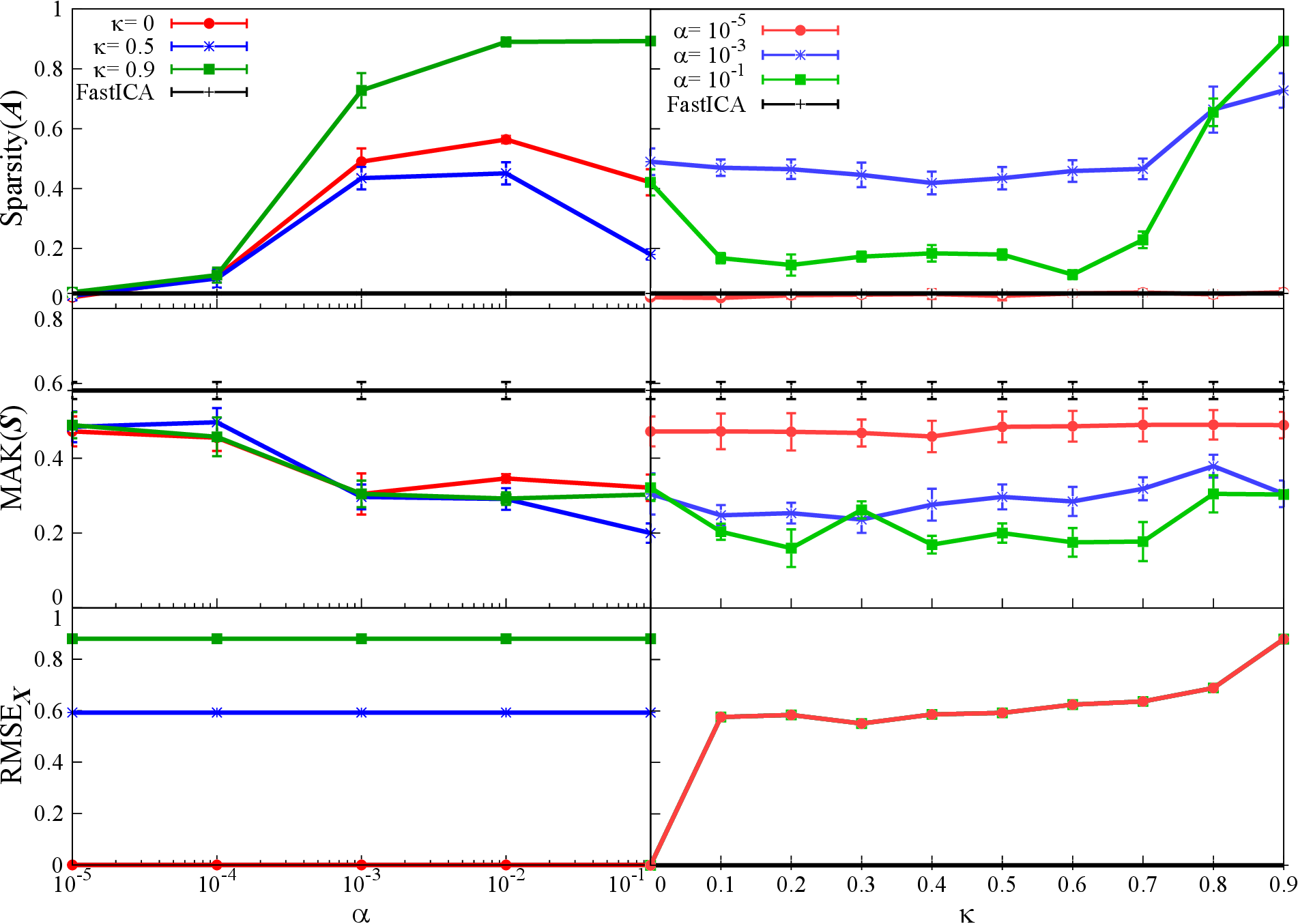}\vspace{5mm}
      \includegraphics[scale=0.4]{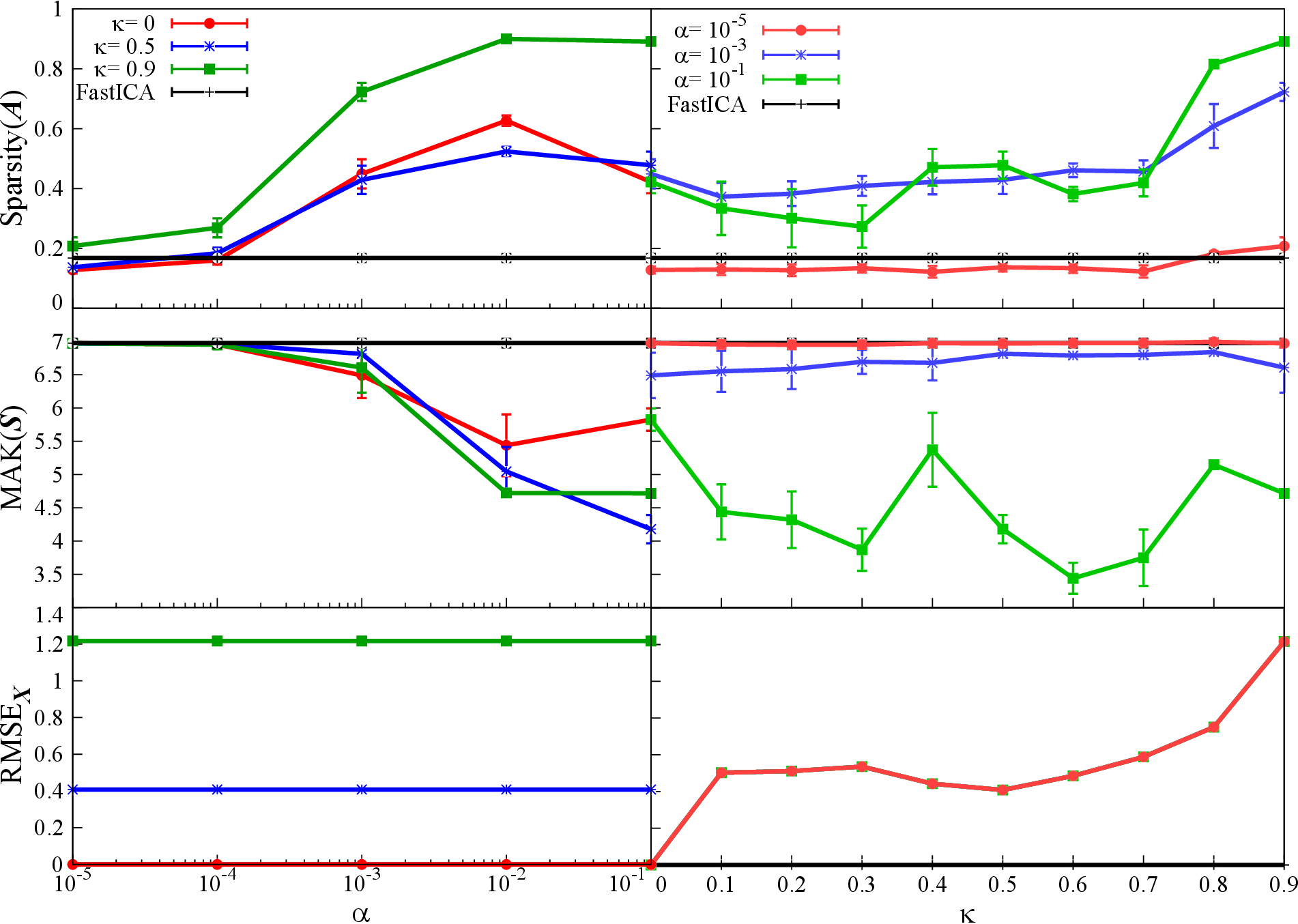}\vspace{5mm}
	  \caption{The behaviors of Sparsity($\bm{A}$), $\mathrm{MAK}(\bm{S})$, and  $\mathrm{RMSE}_{\bm{X}}$ versus $\alpha$ (left column) or $\kappa$ (right column) for $\bm X^{(1)}$ (top, mixture of Laplace/uniform) and $\bm X^{(2)}$ (bottom, mixture of log-normal/uniform): In both cases of $\bm X^{(1)}$ and $\bm X^{(2)}$,
	  the red ($\kappa=0$) and the black lines (FastICA) are overlapped on the horizontal axis in the bottom left figure, and all lines by our method ($\alpha=10^{-5}, 10^{-3}, 10^{-1}$) are overlapped in the bottom right figure.
	   }
	  \label{fig:figure1}
	 \end{center}
\end{figure}

We move on to the detail of the results in Figure \ref{fig:figure1}.
First, we discuss the sparsity of the estimated mixture matrix $\bm{A}$. The sparsity of $\bm{A}$ 
by our proposed method is larger than FastICA regardless of the value of $\alpha, \kappa$.
We also observe that the sparsity by our method 
is very large and close to 0.9 under larger $\alpha$ and $\kappa=0.9$.
When the value of $\alpha$ is fixed, large sparsity is obtained 
for large $\kappa$ under $\alpha = 10^{-2}$ or $10^1$.
In general, $\kappa$ must be larger for larger sparsity of $\bm{A}$, while $\alpha$ should be tuned carefully.

Next, we observe the behavior of ${\rm MAK}$.
${\rm MAK}$ by our method tends to decrease for larger $\alpha$, while $\rm MAK$ by FastICA does not. 
This may be because our ICA solution under larger $\alpha$ is different from the solution of the conventional FastICA. On the other hand, there is no significant correlation between $\kappa$ and ${\rm MAK}$. Therefore, for larger non-gaussianity, $\kappa$ can take arbitrary value, while $\alpha$ should be appropriately adjusted. 

We also mention the result of the reconstruction error. In Figure \ref{fig:figure1}, no change in 
${\rm RMSE}_{\bm X}$ is observed even if $\alpha$ is varied, and ${\rm RMSE}_{\bm X}$  depends only on $\kappa$. 
It is easy to prove analytically that ${\rm RMSE}_{\bm X}$  depends only on the approximation accuracy of $\bm Q^{\#}$ using the orthogonality of the matrix $\bm W$, which is also indicated by numerical result. In addition, 
by fixing $\alpha$, it is found that ${\rm RMSE}_{\bm X}$ tends to increase for larger $\kappa$. To summarize, $\alpha$ can be chosen arbitrarily for better reconstruction error, while $\kappa$ must be carefully adjusted. Additionally, from the fact that larger sparsity of $\bm A$ can be obtained under larger $\kappa$, large sparsity and small reconstruction error have a trade-off relation.

Lastly, we give the result of the reconstruction success rate.
Before evaluating SR of our method, 
we evaluate Amari distance by FastICA in 20 trials with different initial matrix, 
whose average is $43.04 \pm (8.87 \times 10^{-3})$.
Based on this result, the threshold of success is set as $\psi = 35$ in our experiment,
which is the appropriate value because it is below the average of Amari distance by FastICA.
For reconstruction success rate, we evaluate it in 20 trials with different initial matrix for 
$\alpha = 1.0 \times 10^{-4}, 2.0 \times 10^{-4}, \ldots, 1.0 \times 10^{-3}$. 
Parameter $\kappa$ is set to 0.9 in all cases. 
In Figure \ref{fig:SR}, we show the frequency of samples and SR under various sparsity of estimated matrix $\bm A$. 
The largest SR is observed in the range of $0.75 \leq {\rm Sparsity}(\bm A) < 0.85$. 
As stated in the experimental conditions, the sparsity of the ground-truth mixture matrix $\bm A^{*}$ is 0.8.
Therefore, this result indicates that the ground-truth mixture matrix $\bm A^{*}$ can be obtained by our method 
with high probability, especially when the sparsity of $\bm A$ is close to the one of $\bm A^{*}$. 

\begin{figure}
	 \begin{center}
     \includegraphics[scale=0.8]{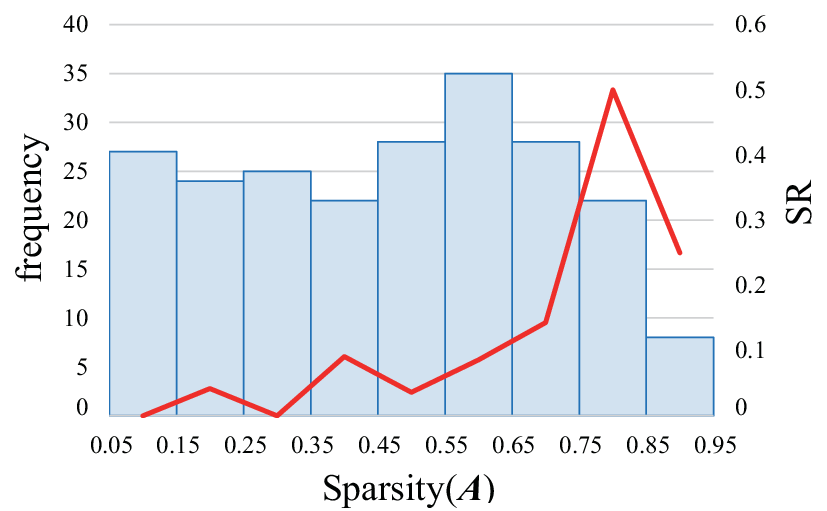}
	 \caption{Frequency and SR versus ${\rm Sparsity}(\bm{A})$:
	 The blue bar and red line indicate the frequency of ${\rm Sparsity} (\bm A)$ and SR at each bin, respectively. 10 samples with ${\rm Sparsity}(\bm A) < 0.05$ are not displayed in this figure.}
	 \label{fig:SR}
	 \end{center}
\end{figure}

\subsubsection*{Result 2: Application to large synthetic data modelling fMRI}

We also apply our method and FastICA to synthetic data $\bm X^{(3)}$ and $\bm X^{(4)}$, 
which model real fMRI data with larger dimension.
Similar to the applications to $\bm X^{(1)}$ and $\bm X^{(2)}$, the behaviors of the sparsity of $\bm A$, 
the kurtosis of $\bm S$, and the reconstruction error are depicted in Figure \ref{fig:figure3}. 

\begin{figure}[htbp]
	 \begin{center}
      \includegraphics[scale=0.4]{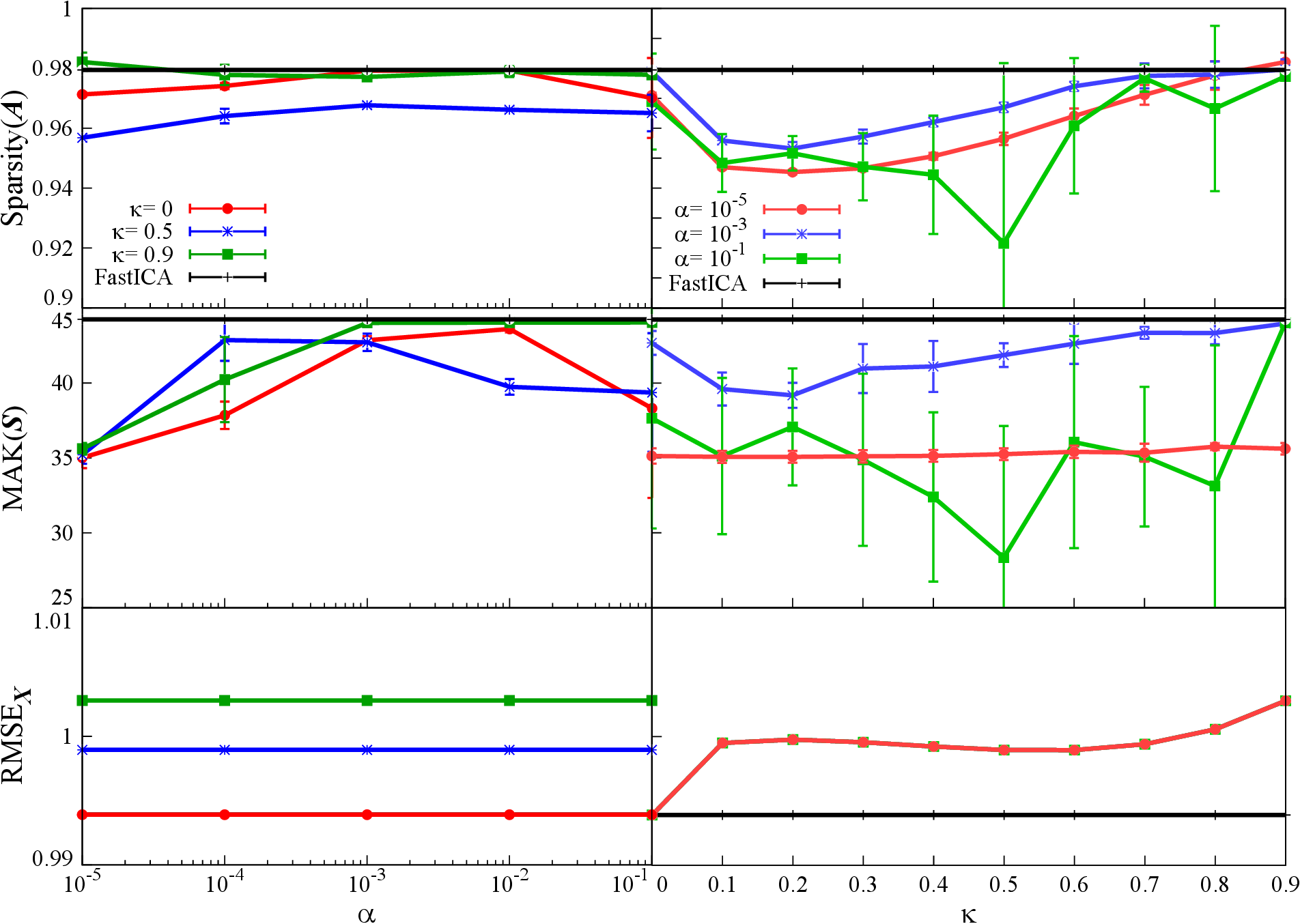}\vspace{5mm}
      \includegraphics[scale=0.4]{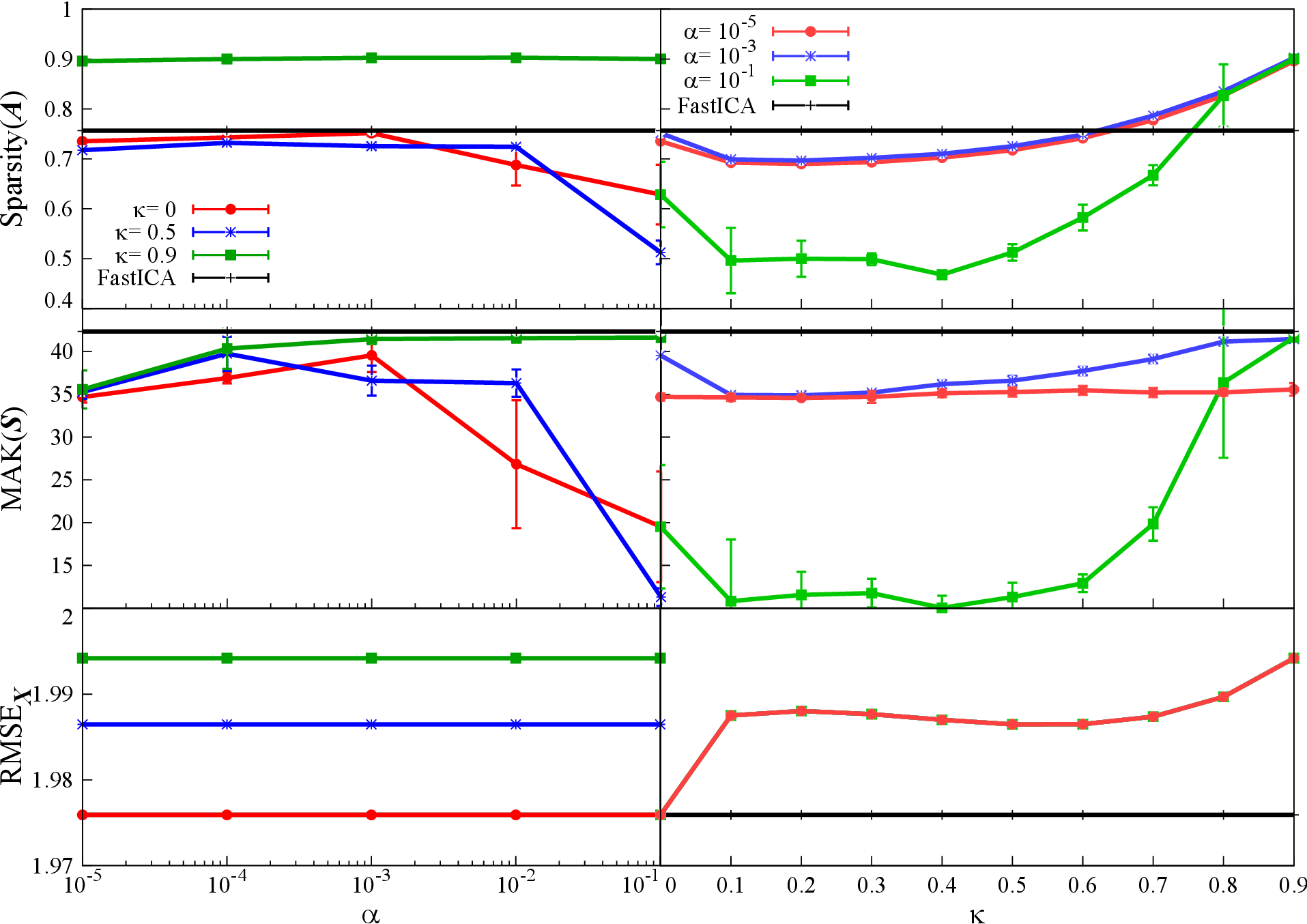}\vspace{5mm}
	  \caption{The behaviors of Sparsity($\bm{A}$), $\mathrm{MAK}(\bm{S})$, and  $\mathrm{RMSE}_{\bm{X}}$ versus $\alpha$ (left column) or $\kappa$ (right column) for large size data
	  $\bm X^{(3)}$ (top) and $\bm X^{(4)}$ (bottom, noisy): 
	  In both cases of $\bm X^{(3)}$ and $\bm X^{(4)}$, the red ($\kappa=0$) and the black lines (FastICA) are overlapped on the horizontal axis in the bottom left figure, and all lines by our method ($\alpha=10^{-5}, 10^{-3}, 10^{-1}$) are overlapped in the bottom right figure.
	   }
	  \label{fig:figure3}
	 \end{center}
\end{figure}

The result is summarized as follows.
First, sparsity of $\bm A$ is almost unchanged even when $\alpha$ is varied, especially in applying to $\bm X^{(3)}$. 
As for the dependence on $\kappa$, sparsity takes a large value at $\kappa=0$, and
it suddenly decrease then gradually tends to increase when $\kappa$ is increased from $0$.
In the application of $\ell_1$-regularized MF,
similar phenomenon of little change in sparsity when varying $\alpha$ has already been observed 
in the previous study \cite{KT2023}: if synthetic data $\bm X$ is generated
by very sparse ground-truth factorized matrix $\bm A^*$, application of SparsePCA does not significantly change 
the sparsity of the resulting factorized matrix even when the sparse parameter in SparsePCA is varied.
Since the evaluation of sparsity here is performed similarly to SparsePCA
in the previous study, we guess that similar behavior of sparsity is obtained in this experiment.
In the application to $\bm X^{(4)}$,
the change of sparsity depends on $\kappa$ rather than $\alpha$, which is similar to $\bm X^{(3)}$. 
Note that the sparsity by FastICA is not so large in this case, while our method can sparsify factorized matrix.
This indicates the validity of our method to obtain sparse factorized matrix from noisy large-size data.

Next, for $\bm X^{(3)}$, there is no clear trend between kurtosis of $\bm S$ and both of sparse parameters $\alpha, \kappa$,
while the statistical error of kurtosis is very large under $\alpha=10^{-1}$ excepting the point $\kappa=0.9$. 
This result suggests that parameters should be tuned carefully for the stable solution with large kurtosis,
if the data is less noisy. In contrast, for $\bm X^{(4)}$, 
kurtosis tends to increase from $\kappa=0.6$ to 0.9.
From this result, $\kappa$ should be larger for larger kurtosis, when our method is applied to noisy large-size data.

The behavior of reconstruction error is similar to Figure \ref{fig:figure1}, while its value is
much larger than those in Figure \ref{fig:figure1}. 
This is caused by the fact that Moore-Penrose pseudo inverse matrix cannot approximate 
the true inverse matrix appropriately under $p>N$, which is the problem inherent in original ICA.

Lastly, we compare the result of application to $\bm X^{(4)}$ with FastICA. 
Before comparison with FastICA, 
we evaluate the sparsity and AD by FastICA in 20 trials with different initial matrix. 
The average of sparsity is $0.765 \pm (2.85\times 10^{-4})$ 
and the average of AD is $38.82 \pm 1.17$. 
Then, in the application of our method here,
the parameter $\kappa$ is set to be $0.7, 0.8, 0.9$, and the experiment is conducted in 20 trials with different initial matrix
for each value of $\kappa$ (totally 60 trials). 
The parameter $\alpha$ is fixed at $10^{-1}$. 
Table \ref{tab:noisyX4} shows the result by our method. The threshold in the definition of SR is set 
at $\psi = 38.82$ for comparison with FastICA.
This result in the table suggests that 
the factorized matrix with lower AD than FastICA can be obtained
with approximately 50\% probability, when the sparsity is larger than FastICA.
Namely, our proposed method 
can obtain a closer factorized matrix to the ground-truth mixing matrix $\bm S^*$ 
than FastICA at a certain probability. This suggests that our method is competitive 
with original FastICA for evaluating of sparse and independent factorized matrices
from large-size noisy data.
As written in section \ref{sec:realdata}, 
it should be emphasized that the validity of our method for application to fMRI data will be discussed.

\begin{table}
\centering
\caption{Result of application to $\bm X^{(4)}$ (noisy data)}
\begin{tabular}{c||c|c|}
sparsity & frequency & SR \\ \hline
smaller than FastICA &  19 & 0.211 \\ \hline
larger than FastICA & 41 & 0.488 \\ \hline
\end{tabular}
\label{tab:noisyX4}
\end{table}

\subsection{Application to real-world data}
\label{sec:realdata}
Next, for verifying the practical utility of our method, we conduct an experiment for feature extraction in real fMRI data. 

\subsubsection*{Dataset and performance measure}
We apply our method to Haxby dataset \cite{Haxbydata}, which is a task-related fMRI dataset recording human's response to 8 images. The 8 images are as follows: shoe, house, scissors, scrambledpix, face, bottle, cat, and chair. 
In the experiment of fMRI data acquisition, one image is shown to a test subject for a while, then the next image is shown after an interval.
Finally, all 8 images are shown sequentially.
 The whole Haxby dataset includes fMRI data of 12 trials from 6 test subjects. 
The previous study \cite{ET2024} shows no significant difference among subjects in this dataset. 
Therefore, in our experiment, we apply our method and FastICA to one trial data 
of test subject No. 2, which was acquired from 39912 voxels with 121 scanning time steps.
 In our experiment, we apply our method and FastICA to one trial data from the test subject No. 2, 
 which was acquired from 39912 voxels with 121 scanning time steps.
 Namely, the data matrix size is $39912 \times 121$ or $\bm{X} \in \mathbb{R}^{39912 \times 121}$. 

As performance measures, we use sparsity in equation (\ref{eq:Sparsity})
and correlation defined by
\begin{equation}\label{eq:correlation}
\mathrm{Correlation}(\bm{s}_i, {\bm{d}^{\mathrm{REST}}}) = \displaystyle \frac{\left| (\bm{s}_i - \bar{\bm{s}_i})^T (\bm{d}^{\mathrm{REST}} - \overline{\bm{d}^{\mathrm{REST}}}) \right|}{\| \bm{s}_i - \bar{\bm{s}_i} \|_2 \| \bm{d}^{\mathrm{REST}} - \overline{\bm{d}^{\mathrm{REST}}} \|_2 },
\end{equation}
where overline means the arithmetic average of all elements in the vector. 
The vector $\bm{s}_i$ means the $i$-th temporal feature vector in the matrix $\bm S$.
The column vector $\bm{d}^{\mathrm{REST}}$ represents the ground-truth timing of resting state: 
the $j$-th element of $\bm{d}^{\mathrm{REST}}$ is 1 if no image is shown to the test subject at the $j$-th time step, 
and 0 if one of the images is shown, respectively. Therefore, the quantity in equation (\ref{eq:correlation}) 
measures the correlation between the ground-truth timing vector of the resting state $\bm d^{\rm REST}$ 
and temporal feature vector in $\bm s$.

\subsubsection*{Result}

\begin{figure}[htbp]
	 \begin{center}
      \includegraphics[scale=0.7]{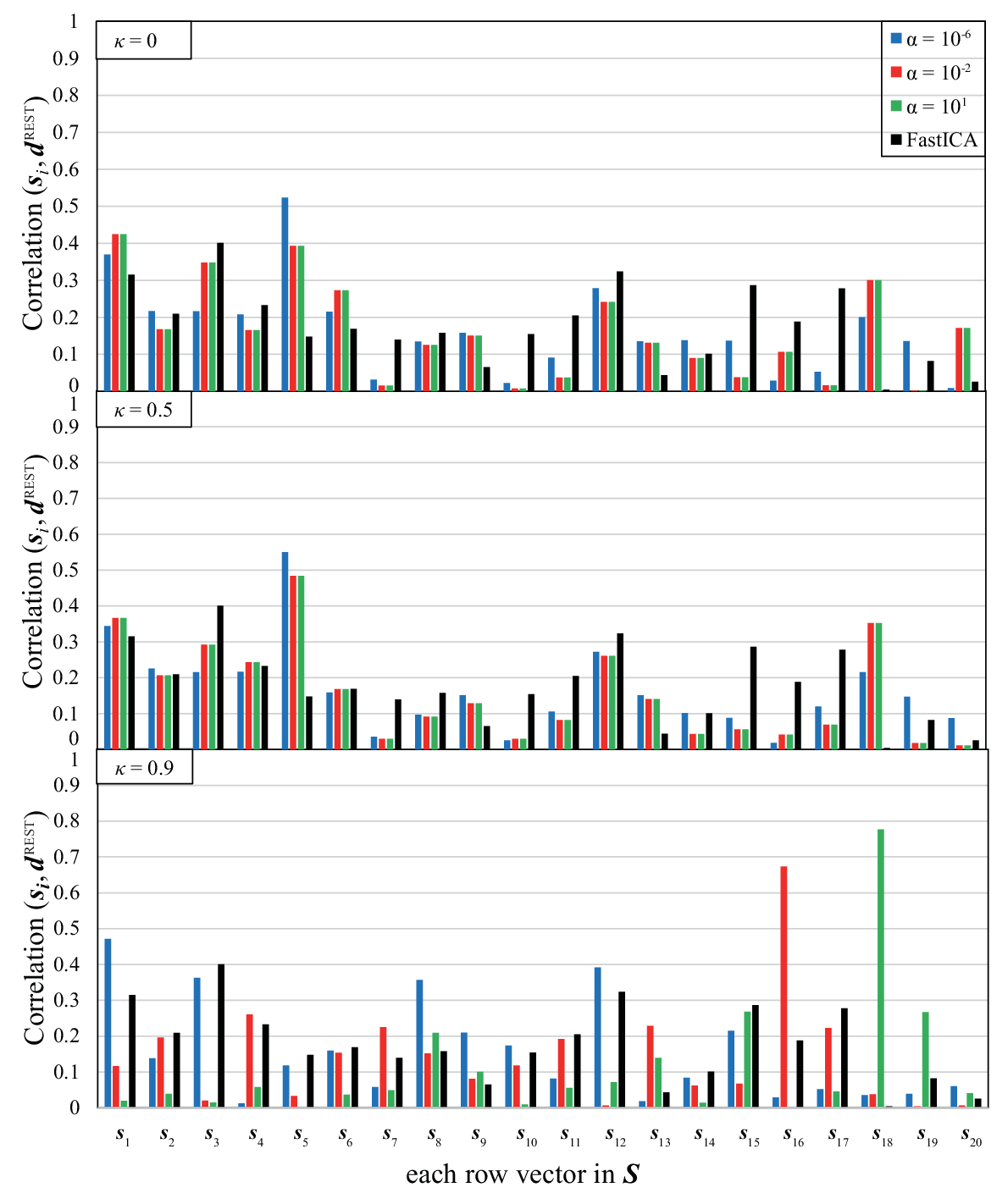}
	  \caption{Correlation by our method: The values of Correlation between the respective temporal feature vector 
	  in $\bm{S}$ and the ground-truth timing vector $\bm{d}^{\mathrm{REST}}$ under various $\alpha, \kappa$ are shown.}
	  \label{fig:correlation}
	 \end{center}
\end{figure}

The result of correlation is depicted in Figure \ref{fig:correlation} under various $\alpha, \kappa$: $\alpha = 10^{-6}, 10^{-2}, 10^1$ and $\kappa = 0, 0.5, 0.9$. 
Other parameters are set as follows: $K=20, M=500, \rho=1, \zeta=10^{-5}, \eta=10^{-3}$. 
From this figure, the value of correlation is at most 0.6 for $\kappa=0, 0.5$, 
whereas it sometimes exceeds 0.6 under $\alpha=10^{-2}, 10^1$ for $\kappa=0.9$. 
In addition, to confirm the validity of our method, 
we apply FastICA and our proposed method under $\kappa=0.9,\alpha=10^1$ to the same data 50 times with different initial matrix,
then conduct Student's two-sample $t$-test for the first, the second, and the third largest values of Correlation.
The result is shown in Table 3, where negative value means that the Correlation by our method is larger.
From this result, there is a significant difference between the results by our method and FastICA, and
it is clear that the first and the second largest values of Correlation by our method are larger than FastICA.
Although the third largest value by FastICA is larger,
we think it is sufficient to claim the advantage of our method over FastICA. 
 
\begin{table}
\centering
\caption{Result of Student's two-sample $t$-test}
\begin{tabular}{c||c|c|c|}
value of Correlation  & 1st & 2nd & 3rd \\ \hline
t-statistic &  -25.7 & -3.33 & 5.43 \\ \hline
p-value & $4.31 \times 10^{-30}$ & $1.52 \times 10^{-3}$ & $1.14 \times 10^{-6}$ \\ \hline
\end{tabular}
\label{tab:tstatistic}
\end{table}

Next, we visualize the extracted feature vectors by our method. 
The 18th row vector in $\bm S$ by our method under $\alpha=10^1, \kappa=0.9$ 
is depicted in Figure \ref{fig:proposed18th}A. 
Note that the 18th row vector has the largest Correlation in our result
under $\alpha = 10^1, \kappa=0.9$ as shown in Figure \ref{fig:correlation}. 
The timing of showing image to a test subject, which reflects the information of the vector $\bm{d}^{\mathrm{REST}}$, is also shown in the figure. 
From this result, the temporal changes between resting and non-resting states can be tracked 
easily in the feature vector by our method. 
Additionally, the spatial map of the 18th spatial feature vector in $\bm A$ by our method 
is shown on the cross sections of the brain in Figure \ref{fig:proposed18th}B.
Note that this spatial feature vector is the counterpart of the 18th temporal feature vector 
in Figure \ref{fig:proposed18th}A. 
For spatial map, the column vectors in $\bm A$ are normalized to have zero mean and unit variance, and elements within 3 standard deviations of the mean are truncated to zero.
For visualization of the spatial map, we use Nilearn (version 0.10.1) in Python library. 
In this figure, several active regions under resting state or the response to some visual stimuli can be observed. 
In particular, strong activations are observed in the cerebellum, which is known to be activated by visual stimuli. 
This result indicates that our method with high sparsity setting can identify brain regions 
for information processing of visual stimuli with high accuracy.

To compare our proposed method and FastICA, 
we show the behaviors of two feature vectors given by FastICA, namely the 3rd
and the 17th vectors.
For clarity, extracted feature vectors are denoted with a superscript, which represents the method 
for comparing FastICA and our proposed method. For example, the 3rd vector in $\bm S$ by FastICA 
is written as $\bm s_3^{(\rm FastICA)}$.
First, the vector $\bm s_3^{(\rm FastICA)}$ and the spatial map of $\bm a_3^{(\rm FastICA)}$ 
are depicted in Figures \ref{fig:fastica3rd}A and \ref{fig:fastica3rd}B, respectively. 
Note that the vector $\bm s_3^{(\rm FastICA)}$ is most strongly correlated 
with the timing vector of visual stimuli as in Figure \ref{fig:correlation}. 
However, from Figure \ref{fig:fastica3rd}A, resting and non-resting regions in fMRI data cannot be discriminated 
from the shape of the feature vector by FastICA. 
In addition, we cannot clearly identify which parts of the brain are strongly activated from Figure \ref{fig:fastica3rd}B. 
We also depict the vector $\bm s_{17}^{\rm (FastICA)}$ and the spatial map of $\bm a_{17}^{\rm (FastICA)}$ 
in Figures \ref{fig:fastica17th}A and \ref{fig:fastica17th}B, respectively.
Note that the vector $\bm a_{17}^{\rm (FastICA)}$ is 
most strongly correlated with the vector $\bm a_{18}^{\rm (ours)}$, 
whose counterpart $\bm s_{18}^{\rm (ours)}$ has the largest value of Correlation 
with $\bm d^{\rm REST}$ under appropriate $\alpha, \kappa$ as in Figure 4.
For Correlation between these two column vectors, see also Figure \ref{fig:comparison}A in the following.
Similarly to the vector $\bm s_3^{(\rm FastICA)}$, 
we cannot find significant synchronization with the timing of visual stimuli 
from Figure \ref{fig:fastica17th}A, and it is difficult to understand what the spatial map of $\bm a_{17}^{\rm (FastICA)}$ means because the area of activation in the brain is not clear from Figure \ref{fig:fastica17th}B. 

For relation between spatial maps by the two methods, we evaluate the absolute value of Correlation 
between the vector $\bm a_{18}^{\rm (ours)}$ and each column vector of $\bm A$ by FastICA in Figure \ref{fig:comparison}A. 
From this result, we find that the vector $\bm a_{17}^{\rm (FastICA)}$ 
is most strongly correlated with the vector $\bm a_{18}^{\rm (ours)}$. 
In Figure \ref{fig:comparison}B, 
we depict the scatter plot of the element in the vector $\bm a_{18}^{\rm (ours)}$ vs. 
the corresponding element in the vector $\bm a_{3}^{\rm (FastICA)}$ (top) or $\bm a_{17}^{\rm (FastICA)}$ (bottom), where the values of the elements in spatial vectors are normalized to the range $[-1, 1]$ by the linear transformation. 
From this figure, the vector $\bm a_{17}^{\rm (FastICA)}$ is more strongly correlated with 
the vector $\bm a_{18}^{\rm (ours)}$. 
Hence, these two spatial feature vectors will represent similar spatial networks. 
As mentioned in the result of Figure \ref{fig:fastica17th}, 
synchronization with the timing of visual stimuli is not observed in the vector $\bm s_{17}^{\rm (FastICA)}$, 
which is the counterpart of the spatial feature vector $\bm a_{17}^{\rm (FastICA)}$.
In contrast, the temporal feature vector $\bm s_{18}^{\rm (ours)}$ is clearly synchronized 
with the timing of visual stimuli, and the corresponding spatial feature $\bm a_{18}^{\rm (ours)}$ also shows activation in the region related to visual stimuli.
From these facts, we can conclude that our method outperforms FastICA in feature extraction from fMRI data.

\begin{figure}[htbp]
	 \begin{center}
      \includegraphics[scale=0.9]{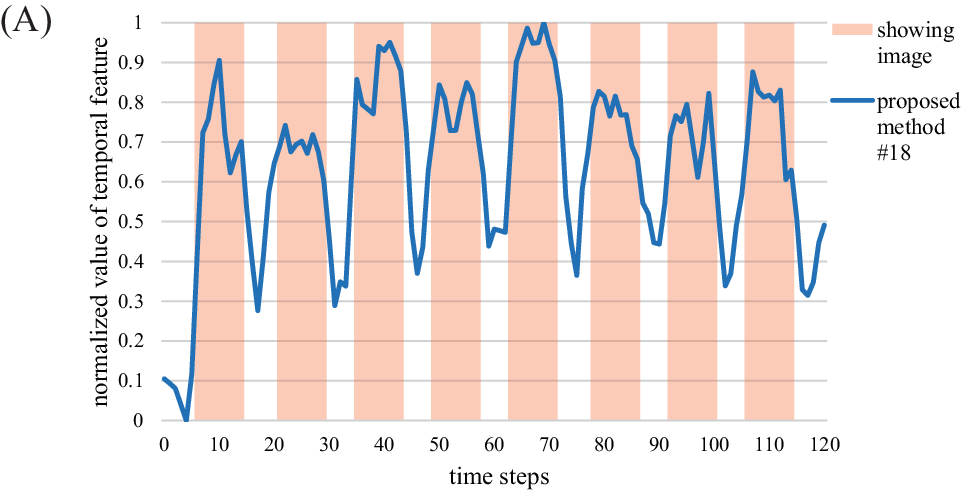}\vspace{1cm}
      \includegraphics[scale=0.7]{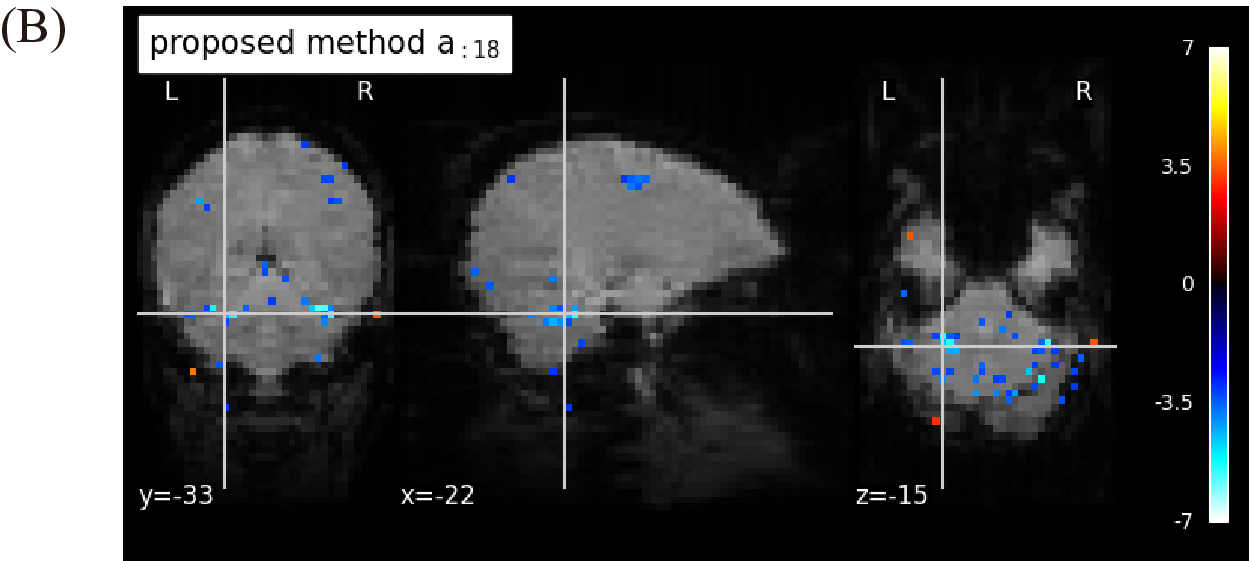}\vspace{1cm}
      \caption{(A) The 18th row vector in $\bm S$ by our proposed method, $\bm s_{18}^{\rm (ours)}$: 
      The orange background indicates the timing of showing image to a test subject. 
      (B) Spatial map depicted on the cross-section of the brain: 
      Each point has one-to-one correspondence 
      with the element in the 18th spatial feature vector in $\bm A$, $\bm a_{18}^{\rm (ours)}$. 
      The points are displayed according to the mapping between voxel position and each element 
      in the spatial feature vector. The color represents the value of the element.}
	  \label{fig:proposed18th}
	 \end{center}
\end{figure}

\begin{figure}[htbp]
	 \begin{center}
      \includegraphics[scale=0.9]{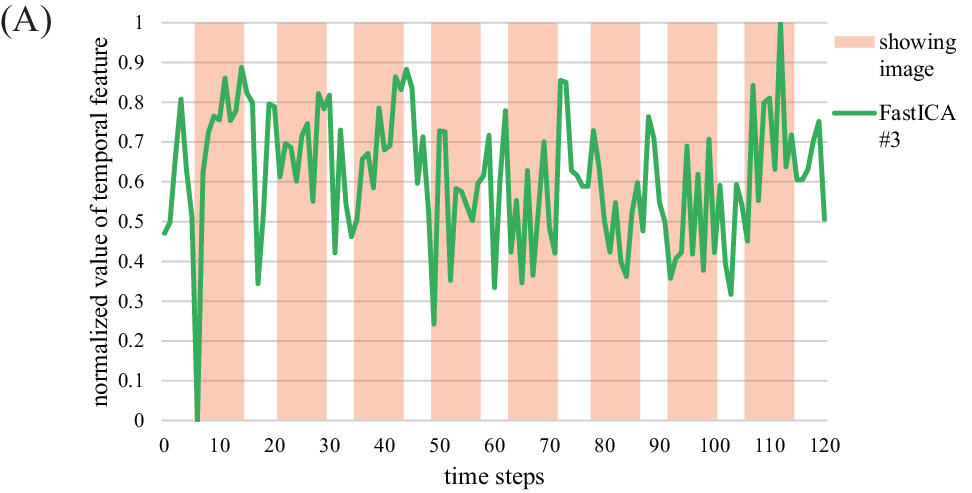}\vspace{1cm}
      \includegraphics[scale=0.7]{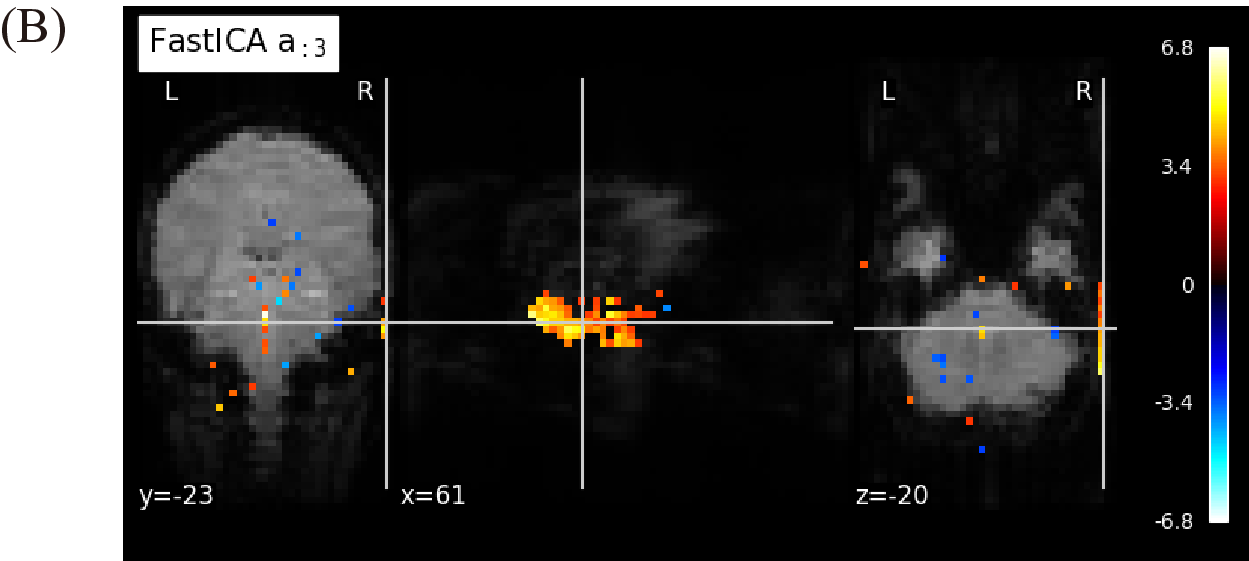}\vspace{1cm}
      \caption{(A) The vector $\bm s_{3}^{\rm (FastICA)}$: 
      The orange background indicates the timing of showing image to a test subject. 
      (B) Spatial map depicted on the cross-section of the brain: 
      Each point has one-to-one correspondence with the element in the vector 
      $\bm a_{3}^{\rm (FastICA)}$.}
	  \label{fig:fastica3rd}
	 \end{center}
\end{figure}

\begin{figure}[htbp]
	 \begin{center}
      \includegraphics[scale=0.9]{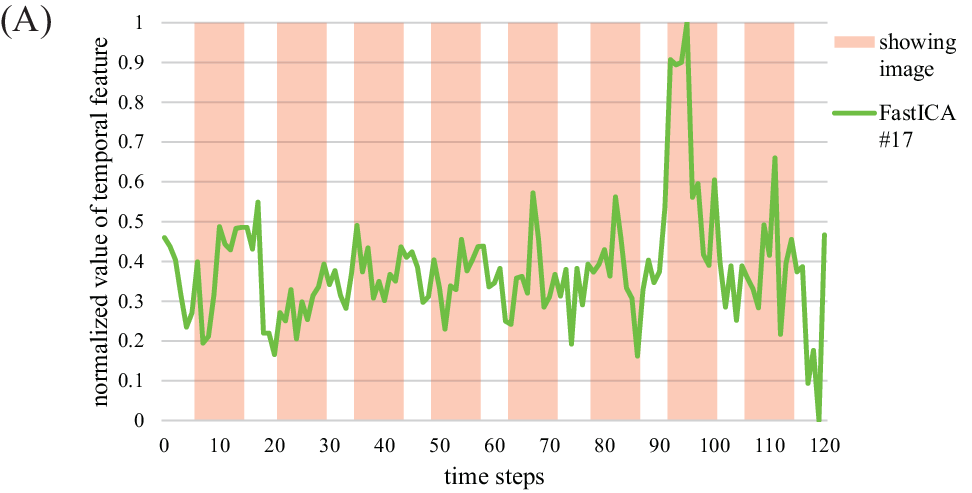}\vspace{1cm}
      \includegraphics[scale=0.7]{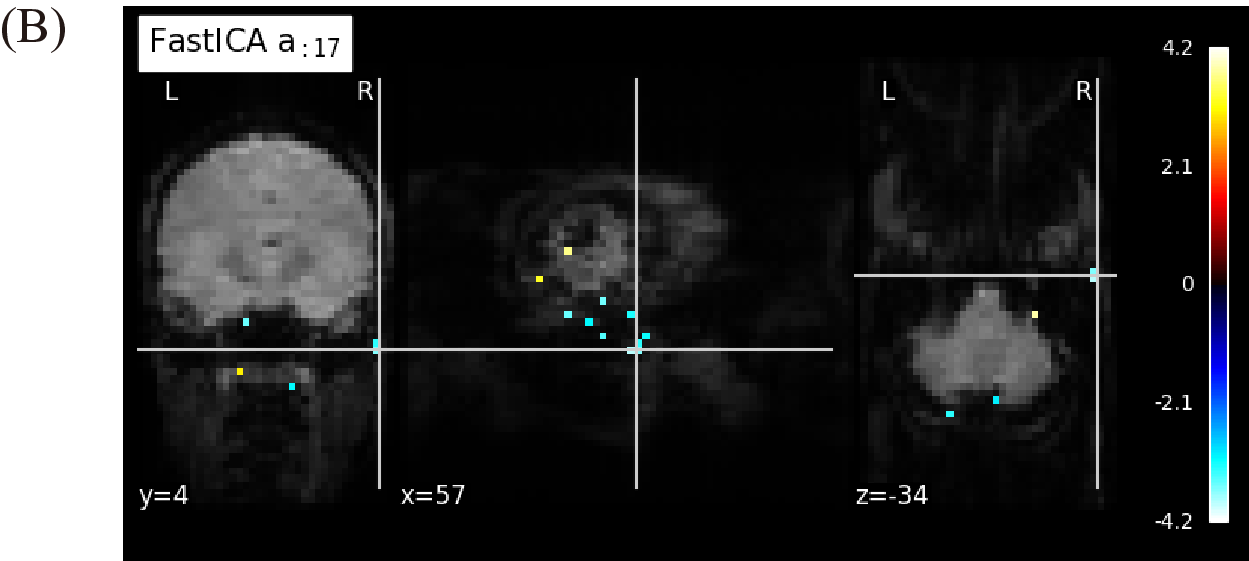}\vspace{1cm}
      \caption{(A) The vector $\bm s_{17}^{\rm (FastICA)}$: 
      The orange background indicates the timing of showing image to a test subject. 
      (B) Spatial map depicted on the cross-section of the brain: 
      Each point has one-to-one correspondence with the element 
      in the vector $\bm a_{17}^{\rm (FastICA)}$.}
	  \label{fig:fastica17th}
	 \end{center}
\end{figure}

\begin{figure}[htbp]
	 \begin{center}
      \includegraphics[scale=0.7]{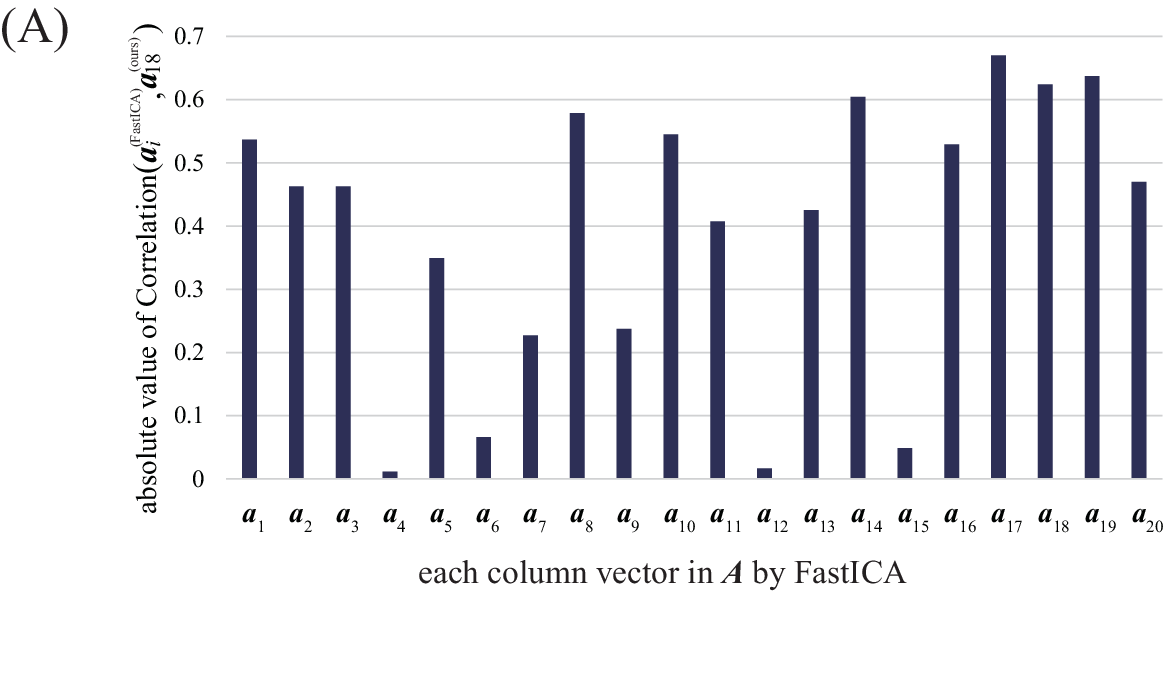}
      \includegraphics[scale=0.7]{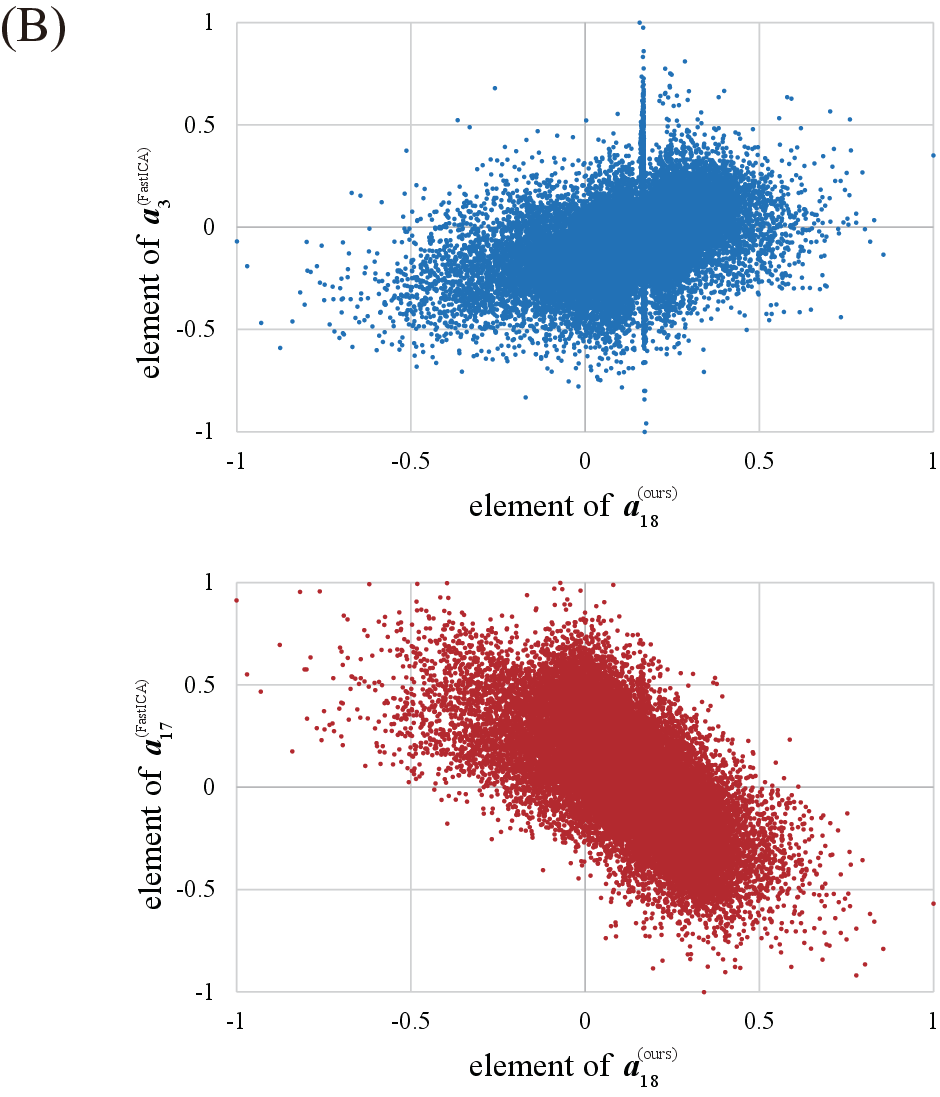}
      \caption{(A) Correlation between the vector $\bm a_{18}^{\rm (ours)}$ 
      and each column vector in $\bm A$ by FastICA 
      (B) Scatter plot of the element in the vector $\bm a_{3}^{\rm (FastICA)}$ vs. 
      the corresponding element in the vector $\bm a_{18}^{\rm (ours)}$ (top), 
      similarly the vector $\bm a_{17}^{\rm (FastICA)}$ vs. the vector $\bm a_{18}^{\rm (ours)}$ (bottom)}
	  \label{fig:comparison}
	 \end{center}
\end{figure}

We think the advantage of our method is due to the sparsity of the estimated mixture matrix $\bm{A}$.
To support this, the sparsity of $\bm{A}$ by this experiment is summarized in Table \ref{tab:SKR}.
From this table, it is found that the parameters giving the feature vector with the largest correlation 
($\alpha=10^1, \kappa=0.9$) 
lead to the sparsest matrix $\bm{A}$.
On the other hand, ${\rm Sparsity}(\bm{A})$ by FastICA is evaluated as 0.104, 
which is much smaller than our method.
In the previous study \cite{sMF2fMRIexA, sMF2fMRIexB, ET2024}, 
it is claimed that the method of MF giving sparse $\bm{A}$ 
can extract appropriate temporal feature vector characterizing the external stimuli. 
Therefore, the result indicating the advantage of our method is consistent with the previous studies.

\begin{table}
	 \centering
	 \caption{$\rm Sparsity$($\bm{A}$), $\rm MAK$($\bm S)$, and $\mathrm{RMSE}_{\bm X}$ 
	 under various $\alpha, \kappa$}
	 \begin{tabular}{l|rrr}
${\rm Sparsity}(\bm A)$ & $\kappa=0$ & $\kappa=0.5$ & $\kappa=0.9$ \\ \hline
$\alpha=10^{-6}$ & 0.095  & 0.099  & 0.099  \\ \hline
$\alpha=10^{-2}$ & 0.098  & 0.101  & 0.101  \\ \hline
$\alpha=10^{1}$ & 0.156  & 0.505  & 0.770  \\ \hline
	 \end{tabular}\vspace{3mm} \\
	 \begin{tabular}{l|rrr}
 ${\rm MAK}(\bm S)$ & $\kappa=0$ & $\kappa=0.5$ & $\kappa=0.9$ \\ \hline
$\alpha=10^{-6}$ & 0.91  & 0.91  & 2.08  \\ \hline
$\alpha=10^{-2}$ & 1.27 & 1.41  & 1.36  \\ \hline
$\alpha=10^{1}$ & 1.27  & 1.41  & 1.50  \\ \hline
	 \end{tabular}\vspace{3mm} \\
	 \begin{tabular}{l|rrr}
 ${\rm RMSE}_{\bm X}$ & $\kappa=0$ & $\kappa=0.5$ & $\kappa=0.9$ \\ \hline
$\alpha=10^{-6}$ & 0.904  & 0.908  & 0.922  \\ \hline
$\alpha=10^{-2}$ & 0.904  & 0.908  & 0.922  \\ \hline
$\alpha=10^{1}$ & 0.904 & 0.908  & 0.922  \\ \hline
	 \end{tabular}
\label{tab:SKR}
\end{table}

\begin{figure}[htbp]
	 \begin{center}
      \includegraphics[scale=0.85]{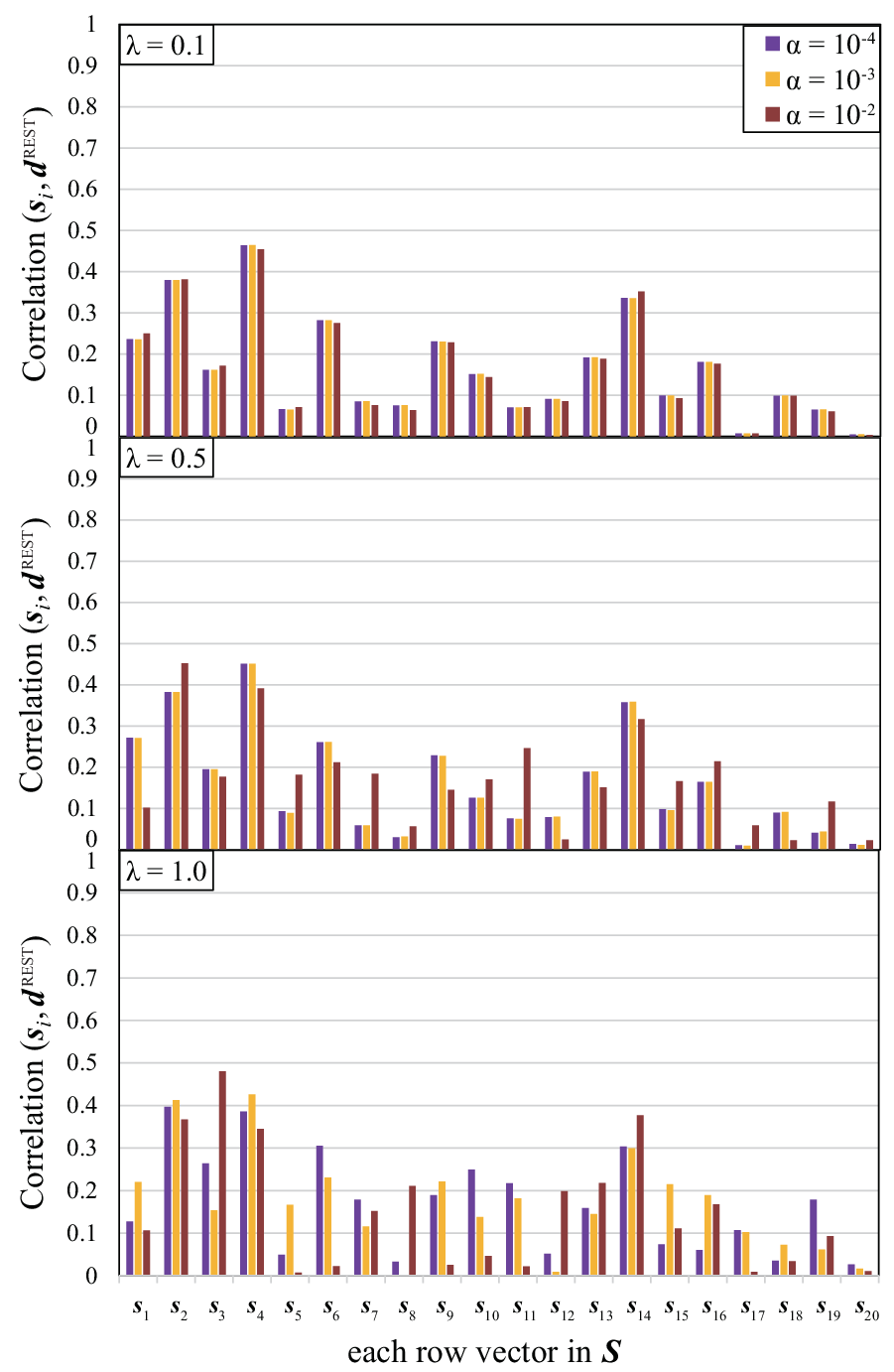}
	  \caption{Correlation by another sparse ICA method in the previous work \cite{harada2021sparse}: 
	  The values of Correlation between the respective temporal feature vector in $\bm{S}$ 
	  and the ground-truth timing vector $\bm{d}^{\mathrm{REST}}$ under various $\alpha, \kappa$ are shown.}
	  \label{fig:correlation2}
	 \end{center}
\end{figure}

To verify the significance of sparsity, the kurtosis of the estimated temporal feature matrix $\bm S$ and the reconstruction error by our method are also summarized in Table \ref{tab:SKR}. For comparison, 
${\rm MAK}(\bm S)$ and ${\rm RMSE}_{\bm X}$ obtained by FastICA are 12.37 and 0.904, respectively. 
Recall that the objective of this experiment is to identify the temporal features response to external stimuli.
Hence, this result suggests that the sparsity of spatial features is more important than the kurtosis of temporal features or the reconstruction error in feature extraction of neuronal activity data. 
This fact can be reinforced by the result of previous study \cite{ET2024}, where they applied sparse MF methods, 
namely SparsePCA and method of optimal directions (MOD), to the same dataset and investigated the performance of feature extraction and reconstruction error. As a result, even under the case of the large reconstruction error, 
appropriate features are obtained if the sparsity of the spatial feature in the brain is large. 
Our result of experiment shows that the accuracy of synchronization between extracted temporal features and the timing of showing images can be improved by sparsifying spatial features in the brain, even if the kurtosis of temporal features is decreased. Therefore, it can be concluded that the sparsity of spatial features in the brain 
is the most important to obtain features corresponding to visual stimuli.

For comparison, we also apply another sparse ICA method in the previous work \cite{harada2021sparse} 
to the same data. 
In their method, the sparsity of the matrix ${\bm W}^T {\bm Q}$ is controlled by two parameters 
$\lambda, \alpha$, and we set $\lambda=0.1, 0.5, 1.0, \alpha=10^{-4},10^{-3},10^{-2}$, respectively. 
The result in Figure \ref{fig:correlation2} shows that very large value of Correlation over 0.5 cannot be obtained,
 in other words appropriate feature cannot be extracted sufficiently by the ICA method of sparsifying the matrix
${\bm W}^T {\bm Q}$. In contrast, our method sparsifying the matrix $\bm Q^{\#} \bm W$
yields very large value of Correlation and works more appropriately for feature extraction from fMRI data.

It should be mentioned that there is also an analysis for the same data by MOD \cite{ET2024}. 
In the previous study, it is claimed that MOD with high sparsity setting can extract the activations in the cerebellum. Comparing the results by the proposed method and MOD, it is observed that 
the spatial maps from the proposed method and MOD are similar. However, the maximum correlation between the temporal vector and visual stimuli (or resting states) by MOD is 0.716 under $K=20$. Therefore, it can be concluded that our method has advantage over MOD under the same value of $K$.
In addition, the previous study also showed that spatial ICA and SparsePCA can extract significant features. 
In the spatial features extracted by these methods,
strong activations are observed in the region of early visual cortex, which differ from the one obtained 
by MOD or our proposed method. 
Such differences should be interpreted as the effectiveness of all methods (MOD, SparsePCA, spatial ICA, and our method) in extracting features from neural activity data, and the comparison among these methods does not 
make sense so much.
In conclusion, this fact also supports the validity of our method for feature extraction from fMRI data. 

\section{Conclusion}
\label{sec:Conclusion}

In this study, we propose a novel ICA method giving sparse factorized matrix by adding an $\ell_1$ regularization term.
We also evaluate its performance by the application both to synthetic and real-world data. 
From the result by numerical experiment, 
we expect that the proposed method gains the interpretability of the result in comparison with the conventional ICA, 
because our method can give sparse factorized matrix by appropriate tuning of parameters. 
Furthermore, in the application to task-related fMRI data, our method can discriminate resting and non-resting states,
and it is competitive with MOD or other MF methods.
This indicates the utility of our proposed method in practical analysis of biological data.

As future works, we will compare the performance of our proposed method with other ICA methods,
in particular stochastic ICA \cite{donnat2019constrained} among them.
Due to its stochastic nature, it may be difficult to obtain a sparse factorized matrix stably.
However, the stochastic method may have advantages over our method, for example on scalability 
by appropriate parameter tuning or data preprocessing.
It is also necessary to apply our method to other real-world data such as genetic or financial ones, 
and investigate its utility.
In particular, in the analysis of gene expression, it is reported that ICA with the assumption of 
independence on temporal features is suitable for gene clustering
\cite{nascimento2017independent, kim2008independent}. 
In addition, it is known that the interpretation of gene clusters becomes easy 
by imposing sparsity on gene features \cite{SparsePCA2}. 
Hence, our method will be helpful for finding a novel feature of gene expression.
Finally, we hope that our method is found to be useful in various fields and can contribute to feature extraction
in many practical problems.

\section*{Acknowledgments}
The authors are grateful to Kazuharu Harada for sharing his related work \cite{harada2021sparse} 
and offering program code of sparse ICA in his work.
This work is supported by KAKENHI Nos. 18K11175, 19K12178, 20H05774, 20H05776, and 23K10978.

\bibliographystyle{unsrt}
\bibliography{article}

\end{document}